\newcommand{\UEI}{{\mathbb{U}_{ei}}} 
\newcommand{\UACT}{{\mathbb{U}_{act}}} 
\newcommand{\UTIME}{{\mathbb{U}_{time}}} 
\newcommand{\UOT}{{\mathbb{U}_{ot}}} 
\newcommand{\UOI}{{\mathbb{U}_{oi}}} 
\newcommand{\UOMAP}{{\mathbb{U}_{omap}}} 
\newcommand{\UE}{{\mathbb{U}_{event}}} 
\newcommand{\EU}{e = (ei, act, time, omap, vmap) \in \UE}
\newcommand{\PI}{\pi_{ei}}
\newcommand{\PA}{\pi_{act}}
\newcommand{\PT}{\pi_{time}}
\newcommand{\PO}{\pi_{omap}}
\newcommand{\PV}{\pi_{vmap}}
\newcommand{\PN}{N = (P, T, F, l)}
\newcommand{\ON}{ON = (N, pt, F_{var})}
\newcommand{\MI}{M_{init}}
\newcommand{\MF}{M_{final}}
\newcommand{\Markings}{ \mathcal{Q_{ON}}}
\newcommand{\AON}{ AN = (ON, \MI, \MF)} 
\newcommand{\OR}{\preceq_E}
\newcommand{\UL}{\mathbb{U}_{OCEL}}
\newcommand{\UPT}{\mathbb{U}_{pt}}
\newcommand{\UAN}{\mathbb{U}_{AN}}
\newcommand{\ULPHI}{{\mathbb{U}_{OCEL}}\!\!\upharpoonright_{\varphi}}
\newcommand{\pat}{\varphi_{oiwlsp}}
\newcommand{\UP}{\!\!\upharpoonright}
\newcommand{\janik}[1]{\textcolor{black}{#1}}
\newcommand{\technical}[1]{\textcolor{black}{#1}}
\newcommand{\final}[1]{\textcolor{black}{#1}}
\begin{document}

\title{Preventing Object-centric Discovery of Unsound Process Models for Object Interactions with Loops \janik{in Collaborative Systems}: \\Extended Version}

\author{Janik-Vasily Benzin\inst{1}
\and Gyunam Park\inst{2}
%
\and Stefanie Rinderle-Ma\inst{1}
}
\authorrunning{Janik-Vasily Benzin et al.}
%
\institute{
Technical University of Munich, Germany; TUM School of Computation, Information and Technology\\
\email{\{janik.benzin,stefanie.rinderle-ma\}@tum.de}\\
\and Chair of Process and Data Science, Computer Science, RWTH-Aachen University, \email{gnpark@pads.rwth-aachen.de}}
\maketitle              
\begin{abstract}
 Object-centric process discovery (OCPD) constitutes a para\-digm shift in process mining. Instead of assuming a single case notion present in the event log, OCPD can handle events without a single case notion, but that are instead related to a collection of objects each having a certain type. The object types constitute multiple, interacting case notions. The output of OCPD is an object-centric Petri net, i.e. a Petri net with object-typed places, that represents the parallel execution of multiple execution flows corresponding to object types. Similar to classical process discovery, where we aim for behaviorally sound process models as a result, in OCPD, we aim for soundness of the resulting object-centric Petri nets. However, the existing OCPD approach can result in violations of soundness. As we will show, one violation arises for multiple interacting object types with loops that arise in collaborative systems. This paper proposes an extended OCPD approach and proves that it does not suffer from this violation of soundness of the resulting object-centric Petri nets. We also show how we prevent the OCPD approach from introducing spurious interactions in the discovered object-centric Petri net. The proposed extension is prototypically implemented.

\keywords{Process mining \and Object-centric Process Discovery \and Object-centric Petri Nets \and Behavorial Soundness.}
\end{abstract}

\section{Introduction}
\label{sec:intro}

Object-centric process discovery (OCPD) 
shifts the focus of process discovery from classical event logs with a single case notion relating events of a single business process execution to the often more realistic \emph{object-centric event logs} \cite{van_der_aalst_discovering_2020}. Events in an object-centric event log are not related to a single case notion, but instead are related to multiple objects of a certain object type. Each of these objects' execution flows is recorded in the respective events. If multiple objects interact, the corresponding event is related to these multiple objects. Hence, an object-centric event log records the execution history of multiple, interacting execution flows. Since classical process discovery techniques such as the Inductive Miner \cite{leemans_discovering_2013} assume a single case notion, none of the existing techniques can be directly applied to an object-centric event log. Although \emph{flattening} presents an approach to extract single case notions from object-centric event logs, the resulting simple event logs can suffer from the issues of \emph{convergence}, i.e., events have to be duplicated, \emph{divergence}, i.e., the actual order of events is lost, and \emph{deficiency}, i.e., events are missing in the simple event log \cite{van_der_aalst_discovering_2020}. 

To discover holistic process models that highlight the relationship between the various execution flows based on object-centric event logs, \cite{van_der_aalst_discovering_2020} proposes the only existing OCPD approach that discovers holistic process models in the form of \emph{object-centric Petri nets}. Object-centric Petri nets are better suited than \emph{artifact-centric process models} \cite{van_eck_multi-instance_2019,van_eck_guided_2017} and \emph{Object-centric Behavioral Constraint (OCBC) models} \cite{li_automatic_2017,artale_enriching_2019} as starting point for analysing processes spanning multiple object types as artifact-centric process models do not visualize the overall business process in a single diagram and OCBC models tend to quickly become too complex and the corresponding discovery and conformance checking approach are not very scalable \cite{van_der_aalst_discovering_2020}.

Similar to classical process discovery \cite{van_der_aalst_soundness_2011}, OCPD should discover \emph{sound} object-centric Petri nets, since otherwise objects may be left at some point in their execution flow leading to an object-centric Petri net that only accepts the empty language or some activities can never be executed. 
 For the existing OCPD approach, two limitations can be identified for settings in which the OCPD approach fails to discover sound process models or introduces restrictions for the business process that are not supported by the given object-centric event log. These settings can arise in \emph{multi-agent systems} \cite{nesterov_discovering_2023}, \emph{service compositions}, \emph{service orchestrations} \cite{DBLP:journals/jwsr/Rinderle-MaRJ11}, and \emph{process choreographies} \cite{fdhila_verifying_2022}, i.e., \janik{in \emph{collaborative systems} characterized by collaboration between various entities whose workflows are modeled as a business process \cite{sundaramurthy_control_2003,jung_business_2004}}. 

 \janik{
 For collaborative systems, OCPD can discover process models
 by conceptualizing \emph{similarly behaving} agents, \emph{similarly behaving} services, and business processes in process choreographies as object types respectively. As a consequence, object interactions in the object-centric Petri net model synchronous \emph{interaction patterns} between collaborating agents, services and business processes. These interaction patterns can quickly become complex \cite{jung_business_2004} and atypical for classic object-centric settings due to the lack of a central controlling authority \cite{sundaramurthy_control_2003,DBLP:journals/jwsr/Rinderle-MaRJ11}. To handle synchronous interaction patterns in OCPD, \cite{nesterov_discovering_2023} propose to specify a set of interaction pattern models as additional input that result in sound process models by design. Hence, the approach in \cite{nesterov_discovering_2023} depends on specified interaction pattern models as additional input to discover sound process models. As the approach in \cite{nesterov_discovering_2023} does not allow loops in the workflow of an agent, the set of discoverable process models is limited. In contrast, our proposed extensions to overcome the two limitations of OCPD do not require models of interface patterns and can handle loops.}

\begin{table}
    \centering
    \caption{Two fragments $L_1, L_2$ of event logs (separated by the horizontal line in the table). Each event can refer to objects of a certain object type (columns retail credit to service provider are object types). An event is represented by a row (except the header).}
    \scalebox{0.81}{
    \begin{tabular}{lccccccc} 
    \toprule
    id & activity    & timestamp           & retail credit                           &  \makecell{corporate\\credit}                         & coordinator & \makecell{service\\provider} & customer    \\ 
    \midrule
    0ab63  & initialize & 2023-03-10T15:55:28 & $\emptyset$  &       $\emptyset$          &  \{151a3\}       &  $\emptyset$   & $\emptyset$    \\
    6b0b9 & receive request & 2023-03-10T15:55:29 &      $\emptyset$       &    $\emptyset$          &  \{151a3\}         & $\emptyset$  & \{0a3a3\}      \\
    ddf21 & delegate request & 2023-03-10T15:55:30 &      $\emptyset$       &    $\emptyset$          &  \{151a3\}         & \{ec135\}  &  $\emptyset$    \\
    kj875 & fail on request & 2023-03-11T11:00:31 &      $\emptyset$       &    $\emptyset$          &           & \{ec135\}  &   $\emptyset$    \\
    9c7f8 & receive request & 2023-03-11T11:00:32 &      $\emptyset$       &    $\emptyset$          &  \{151a3\}         & \{ec135\}  &   $\emptyset$    \\
    207f2 & escalate request & 2023-03-11T11:00:33 &      $\emptyset$       &    $\emptyset$          &  \{151a3\}         & $\emptyset$  & $\emptyset$      \\
    \midrule
        b2589  & check statement & 2023-03-12T15:50:25 & \{a0287\}      & $\emptyset$                     &    $\emptyset$       & $\emptyset$     & $\emptyset$    \\
    9e602  & check statement  & 2023-03-12T15:50:26 & $\emptyset$                     & \{677f7\} &     $\emptyset$      & $\emptyset$  & $\emptyset$  \\
    65145  & report to authority   & 2023-03-12T15:50:37 & \{a0287\}                      & \{677f7\}                     &    $\emptyset$      & $\emptyset$ & $\emptyset$    \\
    \bottomrule
    \end{tabular}}
    \label{tab:motivating-events-table}
\end{table}

To illustrate the benefits and limitations of applying OCPD to collaborative systems, \autoref{tab:motivating-events-table} contains two object-centric event log fragments $L_1$ and $L_2$. The first fragment is recorded in information systems  that support a ``coordinator'' agent in running a marketplace that matches requests by ``customer agents'' with services to fulfil the requests offered by ``service provider'' agents. To discover a process model for the multi-agent system in terms of the respective agent's workflow and the interaction patterns between agents, we conceptualize the three agent types as object types. 

\janik{The multi-agent system of matching requests results in the following interaction pattern.} After initializing, the ``coordinator'' receives a request from the ``customer'', i.e., event with id ``6b0b9'' records objects of types ``coordinator'' and ``customer''. The ``coordinator'' delegates the request to a matching ``service provider'' that subsequently fails on the request and, thus, gives the request back to the ``coordinator''. From the point of the "coordinator", another request is received (cf. event with id ``9c7f9''). \janik{This request is at last escalated to signal employees of the ``coordinator'' agent that a manual matching has to take place.} From the viewpoint of the ``coordinator'', the activity ``receive request'' represents the DO-part of a loop with the REDO-part being the ``delegate request''. From the viewpoint of ``service provider'', the two activities ``delegate request'' and ``receive request'' are in sequential order due to the first ``receive request'' being unrelated to ``service provider''. The mismatch of activity labels recorded in the event log with the semantics of the real-world activities, namely that the activity receiving a request is semantically dependent on further attributes, e.g., customer agents vs. service provider agents, and the context, e.g., the fact that the second request was already delegated before, \janik{presents a serious problem for the existing OCPD techniques in \cite{van_der_aalst_discovering_2020,nesterov_discovering_2023}. The loop for the ``coordinator'' agent excludes the technique in \cite{nesterov_discovering_2023}. While the technique in \cite{van_der_aalst_discovering_2020} is generic enough that it can be applied, it discovers an object-centric Petri net depicted in \autoref{fig:loop} that deadlocks after transition ``t1'' fired, i.e., the model is unsound. As we assume an event log as the only input and require handling of loops, we extend the OCPD approach to overcome the limitation of discovering unsound process models in light of \emph{object interactions with loops}.}

\begin{figure}
  \centering
  \includegraphics[width=0.86\linewidth]{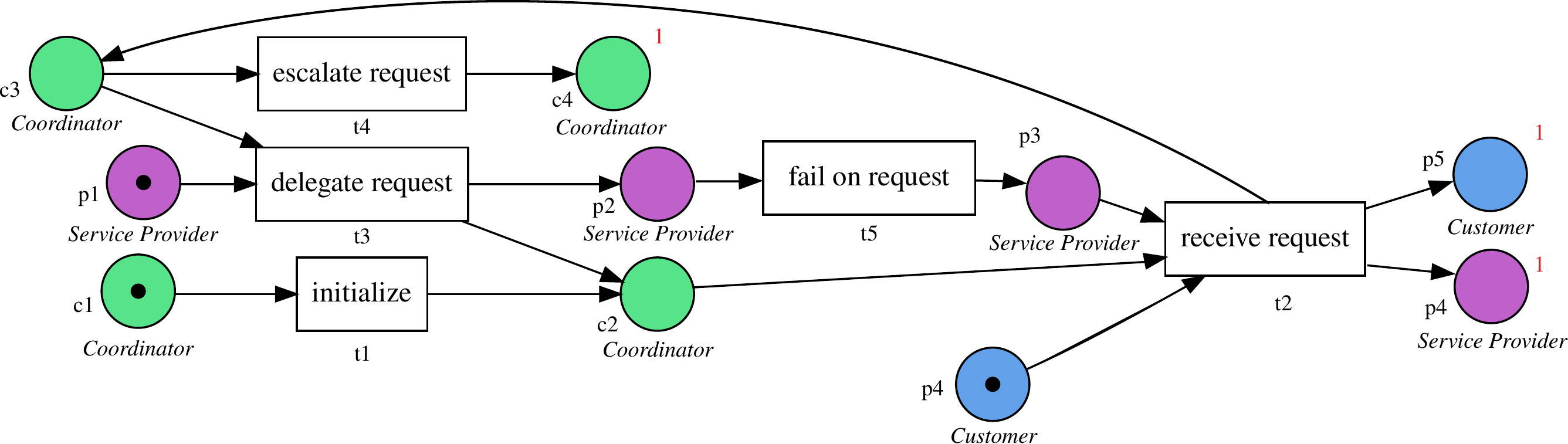}
  \caption{\emph{Unsound} accepting object-centric Petri net $AN_1$ discovered for the first event log fragment $L_1$ by the OCPD approach \cite{van_der_aalst_discovering_2020} due to object interactions with loops contained in $L_1$. Object types of places, e.g. coordinator, are denoted below a place and depicted by color and final markings are denoted as red number next to a place. \janik{Initial markings are depicted by tokens and chosen in \cite{van_der_aalst_discovering_2020} such that the respective agent's workflow starts with its' first activity.}}
  \label{fig:loop}
\end{figure}

\begin{figure}[ht!]
  \centering
  \includegraphics[width=0.74\linewidth]{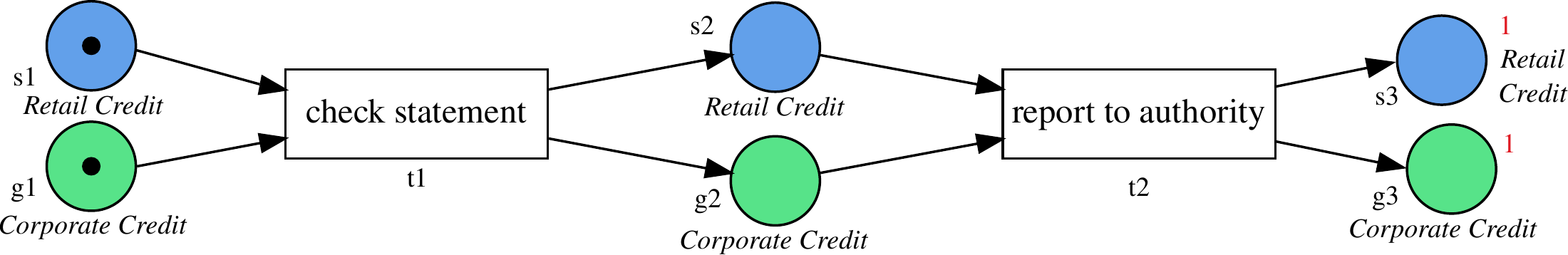}
  \caption{\emph{Sound} accepting object-centric Petri net $AN_2$ discovered for the second event log fragment $L_2$ by the OCPD approach \cite{van_der_aalst_discovering_2020}. Due to the spurious interaction introduced by the OCPD approach, the object-centric Petri net $AN_2$ cannot replay the event log fragment $L_2$, as transition t1 can only fire once.}
  \label{fig:spurios}
\end{figure}

The second fragment $L_2$ shows three events recorded in information systems of a bank. \janik{These information systems support a bank's different business processes ``retail credit'' and ``corporate credit'' transactions \cite{jacobson_credit_2005}. Despite the difference in business processes, every transaction has to be similarly reported in the annual report to an authority \footnote{https://www.ifrs.org/}.} Consequently, the first two events recording the activity of checking credit statements are only related to either ``retail credit'' or ``corporate credit'', but an event recording the activity of reporting to an authority is related to both ``retail credit'' and ``corporate credit''. Although the event log does not record any interaction between the ``retail credit'' and ``corporate credit'' business processes for ``check statement'', the OCPD approach introduces a \emph{spurious interaction} (cf. \autoref{fig:spurios}). The spurious interaction \janik{restricts the execution of the respective business processes in the process orchestration without support by the object-centric event log.}

To discover process models for collaborative systems, we formalize the two identified limitations for the OCPD approach. \final{For the object interactions with loops limitation, we show that the OCPD approach discovers unsound process models}. We propose three extensions of the generic OCPD approach to overcome the limitations. To that end, generalizations of workflow nets and soundness to the object-centric setting are defined.

The remainder is structured as follows. Section \ref{sec:prelim}
introduces OCPD preliminaries.
\autoref{sec:lim} formalizes desired properties of OCPD concepts and \janik{interaction \emph{patterns}} contained in event logs that represent problems for the OCPD approach and identifies two such patterns, i.e., two limitations of the OCPD approach.  \autoref{sec:main} presents three extensions of the OCPD approach to overcome the limitations.
Section \ref{sec:rel} describes related work. Finally, \autoref{sec:concl} concludes this work. 

\section{Preliminaries}
\label{sec:prelim} 
We state basic notations and definitions (\autoref{ssec:basic}) required for OCPD (\autoref{ssec:log}). 
The existing generic OCPD approach \cite{van_der_aalst_discovering_2020} is presented in \autoref{ssec:log}.
d.

\subsection{Basic Notations and Definitions}
\label{ssec:basic}

Given a function $f \in X \rightarrow Y$, we denote it's domain $X$ as $dom(f) = X$ and its range as $ran(f) = \{ y \in Y | \exists_{x \in X} f(x) = y \} \subseteq Y$. Given set $X' \subseteq X$, we denote the restriction of function $f$ on $X'$ as $f\UP_{X'} = \{ (x', f(x')) | x' \in X' \}$\footnote{The restriction is similarly defined and denoted for relation $R \subseteq X \times X$.}. We extend function application to sets $f(X') = \{ y \in Y | \exists_{x \in X'} f(x) = y\}$ for $X' \subseteq X$, also for n-ary functions $f_n \in X_1 \times \ldots \times X_n \rightarrow Y$, $f(X') = f(X' \times \ldots \times X') = \{ y \in Y | \exists_{x_1, \ldots, x_n \in X'} f(x_1, \ldots, x_n) = y \}$. A partial function $f \in X \not\rightarrow Y $ is only defined for elements $x \in X$ that are in the domain $x \in dom(f)$, i.e., $f(x')$ is undefined for $x' \not\in dom(f)$. 

A \emph{trace} $\sigma$ of length $n$ over $X$ is an ordered collection $\sigma \in \{1, \ldots, n\} \rightarrow X$ with $|\sigma| = n$. The set of all sequences over $X$ is denoted as $X^*$. Given a universe of activity names $\UACT$, a \emph{simple event log} is a multiset of traces $L_{SEL} \in \mathcal{B}(\UACT^*) = \mathbb{U}_{SEL}$. Traces record business process executions that are modelled as a \emph{labeled Petri net}. In the following, we use common definitions, semantics and notation for labeled Petri nets, \emph{accepting Petri nets}, \emph{workflow nets} and \emph{soundness} of workflow nets and refer to \cite{van_der_aalst_discovering_2020,van_zelst_translating_2020,van_der_aalst_soundness_2011} for details. 
The universe of accepting Petri nets is denoted as $\mathbb{U}_{APN}$.
A \emph{place-bordered fragment} $N'$ of a labeled Petri net $N$ is a \emph{weakly connected} \emph{subnet} with $N' = (P', T', F\!\!\upharpoonright_{P' \times T'}, l\UP_{T'})$, $P'\subseteq P$, $T' \subseteq T$, $F\UP_{P' \times T'} = F \cap ((P' \times T') \cup (T' \times P'))$ such that all vertices $x' \in P' \cup T'$ that are connected to vertices $x \in (P \cup T) \; \setminus (T \cup P)$ in $N$ that do not belong to $N'$ are places, i.e., $\{ x' | (x', t) \in F \setminus F' \lor (t, x') \in F \setminus F' \} \subseteq P'$. 


A discovery technique $disc$ is a function mapping simple event logs onto accepting Petri nets, i.e., $disc \in \mathbb{U}_{SEL} \rightarrow \mathbb{U}_{APN}$ \cite{van_der_aalst_discovering_2020}. In the following, we denote with $IM \in \mathbb{U}_{SEL} \rightarrow \mathbb{U}_{APN} $ the Inductive process discovery technique (Inductive miner) \cite{leemans_discovering_2013}. The Inductive miner discovers \emph{process trees} that correspond to sound accepting Petri nets \cite{leemans_discovering_2013,van_zelst_translating_2020}, e.g., $\rightarrow (a, \circlearrowleft(b, c)) $ for activities $a, b, c \in \UACT$ a sequence of activity $a$ and a loop with DO-part $b$ and REDO-part $c$.

\subsection{Object-centric Event Logs, Petri Nets and Process Discovery}
\label{ssec:log}

Events in an object-centric event log are defined with the following universes.

\begin{definition}[\janik{Object-centric Event Log} \cite{van_der_aalst_discovering_2020}]
\label{def:ocel}
$L = (E, \OR )$ is an object-centric event log with $E \subseteq \UE $ and $\OR \subseteq E \times E$ such that:
    \begin{itemize}
        \item $\OR$ defines a partial order (reflexive, antisymmetric, and transitive),
        \item $\forall_{e_1, e_2\in E} \PI(e_1) = \PI(e_2) \Rightarrow e_1 = e_2$, 
        \item $\forall_{e_1, e_2\in E} e_1 \OR e_2 \Rightarrow \PT(e_1) \leq \PT(e_2)$,
    \end{itemize}
    given the following universes:
\begin{itemize}
\item $\UE = \UEI \times \UACT \times \UTIME \times \UOMAP \times \mathbb{U}_{vmap}$
is the universe of events. 
\item ${\mathbb{U}_{ei}}$ is the universe of event identifiers,
\item ${\mathbb{U}_{time}}$ is the universe of timestamps, 
\item ${\mathbb{U}_{ot}}$ is the universe of objects types, 
\item ${\mathbb{U}_{oi}}$ is the universe of object identifiers,
\item $type \in \UOI \rightarrow \UOT $ assigns precisely one object type to each object identifier, 
\item $\mathbb{U}_{omap} = \{ omap \in \mathbb{U}_{ot} \not\rightarrow \mathcal{P}(\mathbb{U}_{oi}) | \forall_{ ot \in dom(omap)} \forall_{ oi \in omap(ot)} type(oi) = ot \}$ is the universe of all object mappings indicating which object identifiers are included per type\footnote{We assume that if $ot \not\in dom(omap)$, then $omap(ot) = \emptyset$.}, 
\item ${\mathbb{U}_{att}}$ is the universe of attribute names, 
\item ${\mathbb{U}_{val}}$ is the universe of attribute values, and
\item $\mathbb{U}_{vmap} = \mathbb{U}_{att} \not\rightarrow \mathbb{U}_{val}$ is the universe of value assignments.
\end{itemize}

Given $\EU$, we define the following event projections: $ \PI(e) = ei, \PA(e) = act, \PT(e) = time, \PO(e) = omap, \\\PV(e) = vmap$.
We denote the set of event logs as $\mathbb{U}_{OCEL}$.

\end{definition}

Hence, an event $\EU $ is identified by its unique event identifier $ei$, the activity $act$, a timestamp $time$ and the two mappings $omap$ and $vmap$ that reference the objects related to the event and the attribute values. For the first row of \autoref{tab:motivating-events-table} describing event $e_0$, we have $ \PI(e_0) = \text{0ab63}, \PA(e_0) = \text{initialize}, \PT(e_0) = \text{2023-03-10T15:55:28}, \\\PO(e_0)(\text{coordinator}) = \{\text{151a3}\}, \PO(e_0)(\text{Customer}) = \\\PO(e_0)(\text{service provider}) = \emptyset, \text{and} \;\PV(e_0)(at) = \bot $ for all $at \in \UAN$.

As stated in \autoref{sec:intro}, a single case notion is missing in an event log $L \in \UL$, but instead each event is related to objects of certain object types that is captured in $omap$. Any of these object types $ot$ can be used to flatten the event log into a simple event log $L^{ot}$ having a single case notion defined by objects of that object type. We denote the flattening of an event log as $flatten_{ot}$ and refer to \cite{van_der_aalst_discovering_2020} for a formal definition. 

Note, that after flattening, the flattened event log can be used as a simple event log such that all classical process discovery techniques can be applied on the flattened event log. As described in \autoref{sec:intro}, flattening can introduce serious issues into the flattened, simple event log in the form of divergence, convergence and deficiency. As these issues are not critical to the limitations in \autoref{sec:lim} and extensions in \autoref{sec:main}, we refer to \cite{van_der_aalst_object-centric_2019,van_der_aalst_discovering_2020} for a formalization of these issues. 

By typing places with a function $pt \in P \rightarrow \UOT$, the respective execution flows of objects of a certain type are distinguished in a labeled Petri net. Furthermore, multiple objects of a given object type can be related to a single event $e$ such that variable arcs connected to a transition labeled with the activity of $e$ are possible. Extending labeled Petri nets with place types and variable arcs results in an object-centric Petri net.

    

\begin{definition}[Object-centric Petri Net \cite{van_der_aalst_discovering_2020}] 
\label{def:on}
    An \emph{object-centric Petri net} is a tuple $ON = (N, pt, F_{var})$ where $N = (P, T, F, L)$ is a labeled Petri net, $pt \in P \rightarrow \UOT$ maps places onto object types, and $F_{var} \subseteq F$ is the subset of variable arcs.
\end{definition}

Without the \emph{well-formed} property stated in \cite{van_der_aalst_discovering_2020}, it is possible for a transition to have a variable arc from a place of object type $ot_1$ and a non-variable arc to another place of object type $ot_1$, i.e., objects of type $ot_1$ disappear. As this is not desired, well-formed object centric Petri nets exclude such structures. 

In the following, we omit to say well-formed object-centric Petri net, as any forthcoming object-centric Petri net is meant to be well-formed \janik{and refer to \cite{van_der_aalst_discovering_2020} for a formal definition.} In contrast to the markings of labeled Petri nets, markings in an object-centric Petri net carry object identifiers. Consequently, possible tokens have to mind the respective place type. Due to variable arcs, it is possible for a transition to consume multiple tokens at once during firing. What tokens, i.e. what object identifiers of a certain object type, are consumed is captured in function $b \in \UOMAP $ and denoted in a binding $(t, b)$ for transition $t$. 

\begin{definition}[\janik{Marking, Binding Execution} \cite{van_der_aalst_discovering_2020}] 
\label{def:marking}
    Let $\ON$ be an object-centric Petri net with $\PN$. $Q_{ON} = \{(p, oi) \in P \times \UOI | type(oi) = pt(p)\}$ is the set of possible tokens. A marking $M$ of $ON$ is a multiset of tokens, i.e., $M \in \mathcal{B}(Q_{ON})$. Let $\ON$ be an object-centric Petri net with $\PN$. $B = \{(t, b) \in T \times \UOMAP | dom(b) = tpl(t) \wedge \forall_{ot\in tpl_{nv}(t)} | b(ot) | = 1 \}$ is the set of all possible bindings. $(t, b) \in B$ is a binding and corresponds to the execution of transition $t$ consuming selected objects from the input places and producing the corresponding objects for the output places (both specified by $b$). $cons(t, b) = [(p, oi) \in Q_{ON} | p \in \bullet t \wedge oi \in b(pt(p))]$ is the multiset of tokens to be consumed given binding $(t,b)$. $prod(t,b) = [(p, oi) \in Q_{ON} | p \in t\bullet \wedge oi \in b(pt(p))]$ is the multiset of tokens to be produced given binding $(t, b)$. Binding $(t, b) $ is \emph{enabled} in marking $M \in \mathcal{B}(Q_{ON})$ if $cons(t,b) \leq M$. The occurence of an enabled binding $(t,b)$ in marking $M$ leads to the new marking $M' = M - cons(t,b) + prod(t,b)$. This is denoted as $M \xrightarrow{(t,b)} M'$. 
\end{definition}

\janik{In contrast to labeled Petri nets, the execution of a transition $t \in T$ in an object-centric Petri net consumes objects from its \emph{pre-set} $\bullet t$ and produces objects to its \emph{post-set} $t \bullet$ as tokens are objects.}

Similar to the accepting Petri nets, an accepting object-centric Petri net defines an initial and final marking. 

\begin{definition}[Accepting Object-centric Petri Net  \cite{van_der_aalst_discovering_2020}]
\label{def:an}
    An accepting \\object-centric Petri net is a tuple $AN = (ON, \MI, \MF) $ composed of an object-centric Petri net $\ON$, an initial marking $\MI \in \Markings $, and a final marking $\MF \in \Markings$. The universe of all accepting object-centric Petri nets is denoted as $\mathbf{U}_{AN}$.
\end{definition}

Analogous to a process discovery technique $disc \in \mathbb{U}_{SEL} \rightarrow \mathbb{U}_{APN}$ that discovers accepting Petri nets given a simple event log, OCPD techniques $ocpd \in \UL \rightarrow \mathbb{U}_{AN}$ discover accepting object-centric Petri nets given an event log. 
As the only existing OCPD technique that discovers accepting object-centric Petri nets, the idea behind the generic OCPD approach of \cite{van_der_aalst_discovering_2020} is as follows.

\janik{As depicted in \autoref{fig:ocpd}, the generic OCPD approach $ocpd_{base}$ is decomposed into three general mappings. 
First, $disc^{OT}$ flattens the event log for each of the $n = |OT|$ object types appearing in the log and discovers accepting Petri nets with a process discovery technique $disc$. Second, $merge^n$ merges all discovered accepting Petri nets into a single labeled Petri net by taking the union of places, transitions, the flow relation and labeling function. The merging is defined such that only transitions having the same activity label result in the same transition name, i.e., only transitions with the same activity label result in the same transition in the merged Petri net. Third, $finalize$ adds place types, variable arcs and initial and final markings to yield an accepting object-centric Petri net. All in all, the resulting object-centric Petri net $ON$ is characterized by the $n$ accepting Petri nets discovered for each object type: $ON\UP_{ot} = APN^{ot} = (N^{ot}, T^{ot}, F^{ot}, l\UP_{T^{ot}})$ with $N^{ot} = \{p \in P | pt(p) = ot \}$, $T^{ot} = \{ t \in T | \exists_{p \in \bullet t \cup t \bullet} pt(p) = ot\}$, and $F^{ot} = F \cap ((P^{ot} \times T^{ot}) \cup (T^{ot} \times P^{ot}))$ (cf. \autoref{fig:ocpd}).}

Despite its flexibility with respect to the employed process discovery technique and the method to identify variable arcs, the proposed OCPD approach has two limitations \janik{for discovering process models for collaborative systems.}


\begin{figure}
  \centering
  \includegraphics[width=0.5\linewidth]{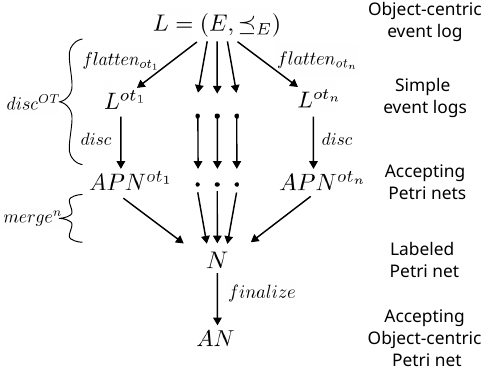}
  \caption{Overview of OCPD approach \cite{van_der_aalst_discovering_2020}.}
  \label{fig:ocpd}
\end{figure}

    

    

\section{Limitations of OCPD Approach: Object Interactions with Loops and Spurios Interactions}
\label{sec:lim}

\janik{First, we elaborate on discovering process models for collaborative systems that lead to the limitations of the OCPD approach in \autoref{ssec:concept}.} To represent and formalize the limitations of the OCPD approach, we propose generalizations of desired properties for object-centric Petri nets, introduce the notion of an \janik{\emph{interaction pattern}} contained in the event log that constitutes a problem for the OCPD approach and isolate the merging and finalizing mappings as critical for discovering desired object-centric Petri nets in a central property of OCPD (\autoref{ssec:struct}). Then, the object interactions with loops limitations is defined as a pattern and shown to be leading to the discovery of unsound object-centric Petri nets by the OCPD approach (\autoref{ssec:limsound}). Finally, the spurious interaction limitation is conceptualized (\autoref{ssec:limreplay}). 

\subsection{\janik{Discovery of Process Models for Collaborative Systems}}
\label{ssec:concept}

The main conceptual idea behind the discovery of process models for collaborative systems, i.e., multi-agent systems, service compositions, service orchestrations and process choreographies, is the interpretation of similarly behaving (= similar workflow) system entities, e.g., agents, services or partner business processes, as object types. We can abstract from the specific entity of the respective collaborative system, e.g., an agent or a service, as long as we have an event log from the collaborative system and aim to discover a process model, because each entity exhibits a workflow recorded through events in the event log. 

If OCPD aims to discover a business process instead of a collaborative system process model, an object type groups objects with a similar workflow, e.g., ``orders'', ``items'', and ``packages'' in \cite{van_der_aalst_object-centric_2019}, such that the OCPD approach discovers WF-nets for each object type. For a business process, object interactions are the result of relationships between object types in the data model of the business process, e.g., a one-to-many relationship between the ``order'' object type and the ``item'' object type. By conceptualizing similarly behaving entities of collaborative systems as object types, the OCPD approach discovers a process model of a collaborative system and the collaboration model of the collaborative system replaces the data model of a business process. 

\begin{figure}
  \centering
  \includegraphics[width=\linewidth]{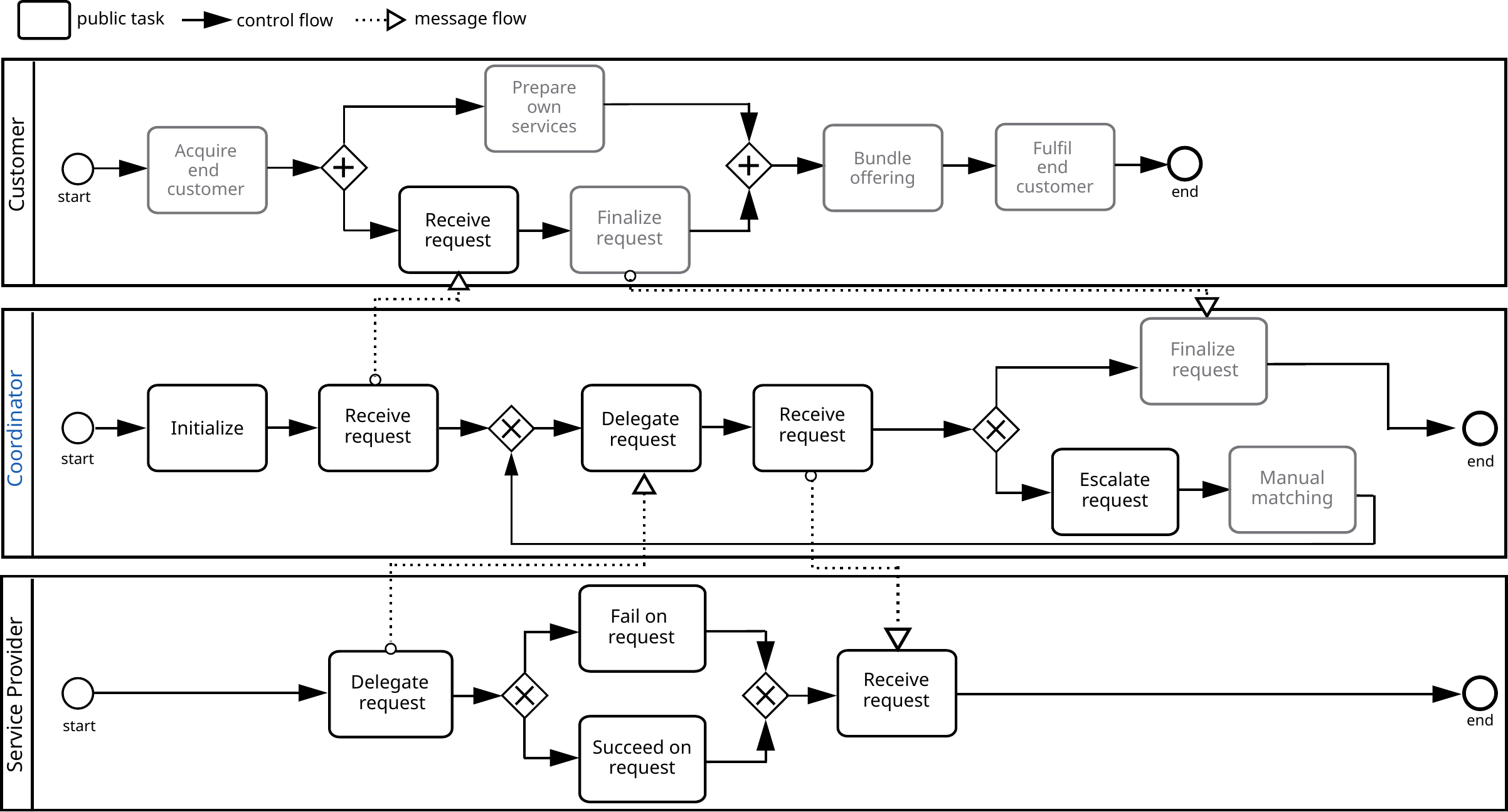}
  \caption{Collaboration model resulting in the object interaction with loops pattern. Activities in gray are not in the event log fragment $L_1$ in \autoref{tab:motivating-events-table}.}
  \label{fig:interaction_da}
\end{figure}

In \autoref{fig:interaction_da}, a collaboration model for the example of a multi-agent system with a ``coordinator'', ``customer'' and ``service provider'' from \autoref{sec:intro} is depicted. Activities of the collaboration model correspond to activity labels of the respective event recorded in the event log fragment $L_1$. Activities in gray are not part of the event log fragment. The collaboration model shows that the ``coordinator'' models receiving requests from the ``customer'' with the same activity label as receiving requests from the ``service provider''. Hence, the interaction pattern depicted in the collaboration model and explained in detail in \autoref{sec:intro} results in the object interactions with loops pattern. The OCPD approach discovers unsound process models (cf. \autoref{fig:loop}) for collaborative systems that exhibit the object interactions with loops pattern, because it has the limitation of expecting similar activity labels to refer to similar object interactions that is particularly problematic for discovering process models of collaborative systems. 

Collaborative systems are characterized by a lack of a central authority governing the system \cite{sundaramurthy_control_2003} such that activity labels have to be interpreted with more caution than in the settings with a central authority, e.g., a business process of a company. A controlled vocabulary cannot be assumed in these situations \cite{DBLP:journals/jwsr/Rinderle-MaRJ11} and interaction patterns in a collaboration model can become more complex than data model relationships \cite{nesterov_discovering_2023}. Moving to the more fine-grained attribute equivalence \cite{DBLP:journals/jwsr/Rinderle-MaRJ11} reveals that the activity label of receiving a request in \autoref{fig:interaction_da} from a ``customer'' and a ``service provider'' is not the same. One of our extensions to the OCPD approach in \autoref{sec:main} builds on the concept of attribute equivalence.

\subsection{Properties of Object-centric Petri Nets}
\label{ssec:struct}

The workflow net as a structural property and its soundness as a behavior property are central concepts in process mining \cite{van_der_aalst_soundness_2011}. In object-centric process mining, these concepts are generalized to \emph{object-centric workflow nets} and \emph{object-centric soundness}. 

\begin{definition}[Object-centric Workflow Net] 
\label{def:ocwfnet}
    An object-centric Petri net $\ON$ is an \emph{object-centric WF-net} iff: 
    \begin{itemize}
        \item Every $ot$-type projection of the well-formed object-centric Petri net is a WF-net, i.e., for every $ot \in ran(pt)$, $ON\UP_{ot} $ is a WF-net. We denote the respective source places as $i_{ot}$ and sink places as $o_{ot}$ of the $ot$-type projection $ON\UP_{ot}$. 
        \item N is weakly connected. 
    \end{itemize}
\end{definition}

Hence, we require each $ot$-type projection to be a WF-net and we only allow object types to occur in an object-centric WF-net for which the event log recorded at least one object interaction with other object types appearing in the event log.

\begin{definition}[Object-centric Soundness] 
\label{def:ocsoundness}
    An accepting object-centric WF-net $\AON$ is \emph{sound} iff:
    \begin{itemize}
        \item Let $ON\!\!\upharpoonright_{ot}$ be the $ot$-type projection, $i_{ot}$ and $o_{ot}$ be its source and sink place. Initial and final marking agree with the source and sink, i.e., $\forall_{ot \in ran(pt)}\\ \;\MI \;\setminus\; (\{ i_{ot} \} \times \UOI) = \emptyset \wedge \MF \;\setminus\; (\{ o_{ot} \} \times \UOI) = \emptyset $.
        \item Option to complete, i.e., $\forall_{M \in R(ON, \MI)} \;\MF \in R(ON, M)$, where \janik{\\$R(ON, M)$ denotes the set of markings reachable from marking $M$.}
        \item No dead transitions, i.e., $\forall_{t \in T} \exists_{M, M' \in R(ON, \MI)}\exists_{b \in \UOMAP} \; M \xrightarrow{(t, b)} M'$.
    \end{itemize}
    We say sound instead of object-centric sound. 
\end{definition}

Since an object-centric WF-net has as many source and sink places as it has object types, the initial and final marking of the accepting object-centric Petri net is only allowed to mark these. The property of "option to complete" and "no dead transitions" is a straightforward generalization for object-centric Petri nets using the binding executions. \janik{In \autoref{fig:sound_model}, a sound object-centric WF-net for the log fragment $L_1$ in \autoref{tab:motivating-events-table} is depicted. Similar to the corresponding collaboration model in \autoref{fig:interaction_da}, the ``coordinator'' first receives a request from a ``customer'' and only later receives a request from the ``service provider''. Despite the object interactions with loops pattern in the log fragment, \autoref{fig:sound_model} shows that a sound object-centric WF-net exists that can model the behavior of the collaborative system.}

\begin{figure}
  \centering
  \includegraphics[width=\linewidth]{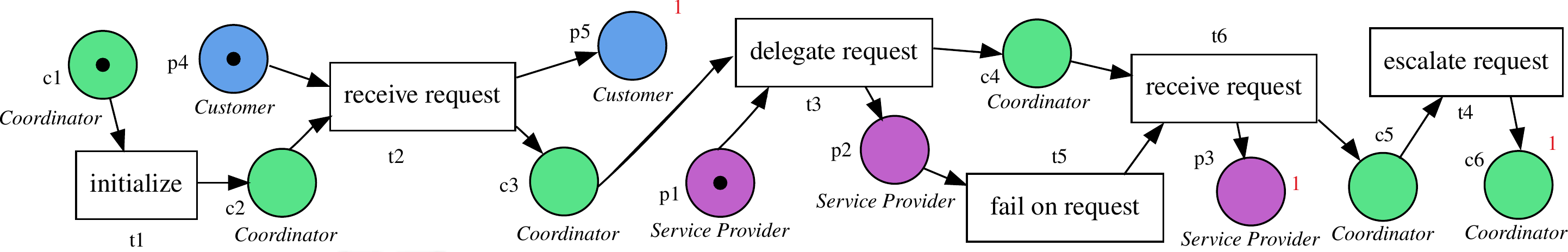}
  \caption{Sound object-centric WF-net for the log fragment $L_1$ that corresponds to the collaboration model of the collaborative system.}
  \label{fig:sound_model}
\end{figure}

\janik{As explained in \autoref{sec:intro}, we only assume an event log as input to the OCPD approach such that the collaboration model in \autoref{fig:interaction_da} cannot be directly used. Nevertheless, the collaboration model defines the interaction patterns between the entities of a collaborative system. As explained in \autoref{ssec:concept}, these interaction patterns correspond to object interactions in the event log. Consequently, we formalize interaction patterns of collaborative systems as object interactions on an event log.}

\technical{The notion of an \emph{object-centric event log pattern} is a logical formula that is satisfied iff the interaction pattern of a collaboration model is contained in the event log. Therefore, it can be checked for a given event log whether the OCPD approach faces the corresponding interaction pattern by evaluating the logical formula defining the pattern given events of the event log.}

\begin{definition}[Object-centric Event Log Pattern] 
\label{def:pattern}
    An event log pattern is a formula $\varphi$ in first-order logic with variables over the universe of events $\UE$ \janik{that corresponds to an interaction pattern of the collaborative system} 
    such that the set of event logs satisfying $\varphi$ is a strict subset of the universe of event logs, i.e., $\ULPHI = \{ L = (E, \OR) \in \UL | \exists_{E_1 \subseteq E} \; \varphi(E_1) \} \subset \UL $. We say that if $L\in \ULPHI$, the event log $L$ contains the pattern $\varphi$. We denote the universe of event log patterns as $\mathbb{U}_{pt}$. \technical{Given $L \in \ULPHI$, we define the event log $L_{\neg \varphi}$ as the event log in which all the events that satisfy pattern $\varphi$ are removed, i.e., $L_{\neg \varphi} = ( E_{\neg \varphi}, \preceq_{E_{\neg \varphi}}) $ with $  E_{\neg \varphi} $ the largest subset of $E$ such that $ \neg \varphi (E_{\neg \varphi}) $.}
\end{definition}

\janik{In \autoref{fig:interaction_da}, the first interaction between the ``customer'' and the ``coordinator'' corresponds to the object-centric event log pattern $\varphi$ that is true for an event $e$ iff there exist two distinct object types $ot_1, ot_2$ appearing in the event log such that the event records objects of both types, i.e. $\PO(e)(ot_1) \neq \emptyset \wedge \PO(e)(ot_2) \neq \emptyset$. By defining interaction patterns on event logs, we do not require collaboration models in discovering process models of collaborative systems with the OCPD approach.} 
\technical{As defined, given a pattern and an event log containing the pattern, we denote the largest event (sub)log by $L_{\neg \varphi}$ that does not contain the pattern anymore. This allows us to isolate the problematic events from the other events in an event log, thus, enabling us to prove for OCPD approach extensions that the extension discovers sound object-centric Petri nets given event logs containing the pattern. }

\begin{definition}[$PT$-Sound Object-centric Process Discovery Technique] 
\label{def:sound}
    \technical{
    Let $PT \subseteq \UPT$ be a set of event log patterns and $L = (E, \OR) \in UL$ be an event log with object types $OT$. Let $ocpd \in \UL \rightarrow \UAN$ be an OCPD approach that can be decomposed into three mappings $ocpd = finalize \; \circ\, merge^n \;\circ\, disc^{OT}$ with $disc^{OT} \in \UL \rightarrow \mathbb{U}_{APN} \times \ldots \times \mathbb{U}_{APN} $, $merge^n \in \mathbb{U}_{APN_1} \times \ldots \times \mathbb{U}_{APN_n} \rightarrow \mathbb{U}_{ON} $, and $finalize \in  \mathbb{U}_{N} \rightarrow \UAN $. OCPD approach $ocpd$ is $PT$-sound iff for every 
    object-centric event log pattern $\varphi \in PT$ and for every event log $L \in \ULPHI$ containing the pattern $\varphi$ such that $ ocpd(L_{\neg \varphi}) $ is a sound object-centric WF-net and $ disc^{OT}(L)$ is a $|OT|$-length tuple of sound WF-nets it holds that:}
    \technical{
    \begin{itemize}
        \item $finalize \;\circ\, merge^n (disc^{OT}(L))$ is a sound object-centric WF-net.
    \end{itemize}}
\end{definition}

\technical{
The notion of a $PT$-sound OCPD technique maintains the flexibility inherent in the proposed generic OCPD approach and isolates potentially problematic constructions of merging and finalizing from potential problems during the process discovery on simple event logs. Furthermore, the patterns in focus, i.e., members of $PT$, are isolated from potential further problematic patterns in an event log by requiring discovery of a sound object-centric WF-net for the sublog $L_{\neg \varphi}$. As mentioned, this isolation enables proving sound OCPD techniques for single or multiple patterns, in particular for the object interactions with loops pattern.}

\subsection{Object Interactions with Loops}
\label{ssec:limsound}

As described in \autoref{sec:intro} for the example in \autoref{tab:motivating-events-table}, the OCPD approach discovers unsound process models in light of the object interactions with loops pattern. The pattern is characterized by two object types in an object-centric event log for which a loop of length one is recorded for the first object type and the activities recorded in the DO-part of the loop do not match with respect to the recorded object interactions. 

\begin{definition}[\janik{Object Interactions with Loops Pattern}] 
\label{def:oiwl}
    Let $L = (E, \OR )$ be an object-centric event log. The object-centric event log pattern \\$\varphi_{oiwlp}(e_1, e_2, e_3, e_4) $ is true iff for events $e_1, e_2, e_3, e_4$ it holds that $\exists_{ot_1, ot_2 \in \UOT}\\ \exists_{act_1, act_2 \in \UACT}$ such that:
    \begin{enumerate}[label=\roman*]
        \item all four events are different, i.e., $\PI(e_1) \neq \PI(e_2) \neq \PI(e_3) \neq \PI(e_4)$,
        \item the activities of the first three events are a loop of length one, i.e., $\PA(e_1) = act_1 \wedge \PA(e_2) = act_2 \wedge \PA(e_3) = act_1 \wedge e_1 \OR e_2 \OR e_3 \wedge \PT(e_1) < \PT(e_2) < \PT(e_3)$,
        \item the first three events share an object of object type $ot_1$, i.e., \\$\PO(e_1)(ot_1) \cap \PO(e_2)(ot_1) \cap \PO(e_3)(ot_1) \neq \emptyset$,
        \item only the second and third event share an object of object type $ot_2$, i.e., \\$\PO(e1)(ot_2) \cap \PO(e_2)(ot_2) \cap \PO(e_3)(ot_2) = \emptyset \wedge \PO(e_2)(ot_2) \cap \PO(e_3)(ot_2) \neq \emptyset$, and 
        \item events with activity $act_1$ before the second event do not share objects with the second event for object type $ot_2$, i.e., $\forall_{oi \in \PO(e_2)(ot_2)} e_4 \OR e_2 \wedge \PA(e_4) = act_1 \rightarrow \PO(e_4)(ot_2) \cap \{ oi \} = \emptyset $.
    \end{enumerate}
    is the object interactions with loops object-centric event log pattern (object interactions with loops pattern).
\end{definition}

\janik{The log fragment $L_1$ in \autoref{tab:motivating-events-table} contains the object interactions with loops pattern $\varphi_{oiwlp}$, as the second row is event $e_1$, the third row is event $e_2$, the fourth row is event $e_4$ and the fifth row is event $e_3$ in the pattern. We conject that both of our extensions proposed in \autoref{sec:main} discover sound process models for collaborative systems exhibiting the object interactions with loops pattern. However, we only prove our conjecture for the following subpattern of the object interactions with loop pattern.}

\begin{definition}[Object Interactions with Loops \janik{Subpattern}] 
\label{def:oiwlp}
\technical{
    Let $L = (E, \OR )$ be an object-centric event log. The object-centric event log pattern \\$\varphi_{oiwlsp}(e_1, e_2, e_3, e_4) $ is true iff for events $e_1, e_2, e_3, e_4$ it holds that \\$\exists_{ot_1, ot_2 \in \UOT} \exists_{act_1, act_2 \in \UACT}$ such that:
    \begin{enumerate}[label=\roman*]
        \item all four events are different, i.e., $\PI(e_1) \neq \PI(e_2) \neq \PI(e_3) \neq \PI(e_4)$,
        \item the activities of the first three events are a loop of length one, i.e., $\PA(e_1) = act_1 \wedge \PA(e_2) = act_2 \wedge \PA(e_3) = act_1 \wedge e_1 \OR e_2 \OR e_3 \wedge \PT(e_1) < \PT(e_2) < \PT(e_3)$,
        \item the first three events share an object of object type $ot_1$, i.e., \\$\PO(e_1)(ot_1) \cap \PO(e_2)(ot_1) \cap \PO(e_3)(ot_1) \neq \emptyset$,
        \item the first three events do not share an object of object type $ot_2$, i.e., \\$\PO(e1)(ot_2) \cap \PO(e_2)(ot_2) \cap \PO(e_3)(ot_2) = \emptyset$,
        \item the second and third event share an object of object type $ot_2$, i.e., \\$\PO(e_2)(ot_2) \cap \PO(e_3)(ot_2) \neq \emptyset$,
        \item the fourth event and no other event contains the activities $act_1$ or $act_2$, i.e., $\PA(e_4) \neq act_1 \wedge \PA(e_4) \neq act_2 \wedge \not \exists_{e_5 \in E} \; \PA(e_5) = act_1 \wedge \PA(e_5) = act_2$, and
        \item before the third event records the second execution of the DO-part of the object execution workflow of object type $ot_1$, arbitrary events can be recorded for both object types as long as they do not share activities with events "outside" of the loop and introduce further object interactions, i.e., $\PI(e_1) \neq \PI(e_2) \neq \PI(e_3) \neq \PI(e_4) \wedge e_1 \OR e_4 \OR e_3 \wedge \PT(e_1) < \PT(e_4) < \PT(e_3) \wedge ((\PO(e_4)(ot_1) \cap \PO(e_1)(ot_1) \neq \emptyset \wedge \PO(e_4)(ot_2) = \emptyset) \lor (\PO(e_4)(ot_2) \cap \PO(e_2)(ot_2) \neq \emptyset \wedge \PO(e_4)(ot_1) = \emptyset)) \\\wedge \not \exists_{e_5 \in E} \; (\PT(e_5) < \PT(e_1) \lor \PT(e_3) < \PT(e_5)) \wedge \PA(e_5) = \PA(e_6) $.
    \end{enumerate}
    is the object interactions with loops object-centric event log pattern (object interactions with loops pattern).}
\end{definition}

The object interactions with loops \janik{subpattern} additionally requires that no further object interactions are recorded "within" the loop of length one and that the activities recorded "within" the loop do not occur outside of the loop. Hence, the activity $act_1$ in the DO-part of the loop represents a clearly defined border of the subpattern. This border allows us to show that the object-centric WF-net discovered for the events of the subpattern is a place-bordered fragment of the object-centric WF-net discovered for the whole event log that contains the subpattern.

\begin{lemma}

\label{lemma:fragment}
\technical{
    Given the object interactions with loops subpattern $\varphi = \varphi_{oiwlsp} \in \mathbb{U}_{pt}$, let $L \in \UL\!\!\upharpoonright_{\varphi}$ be an event log that contains the object interactions with loops subpattern, $OT$ be the object types and $A$ the activities appearing in event log $L$. Let $L_{\varphi} = ( E \;\setminus\; E_{\neg \varphi}, \preceq_{E \;\setminus\; E_{\neg \varphi}}) $ be the event (sub-)log containing all events that satisfy the subpattern $\varphi$ and $OT_{\varphi} \subseteq OT$ be the object types appearing in $L_{\varphi}$. If $ ocpd(L_{\neg \varphi}) = AN_{\neg \varphi} $ is an object-centric WF-net, 
    $ ocpd(L_{\varphi}) = AN_{\varphi} $ is an object-centric WF-net, and the Inductive miner $IM$ is applied as process discovery technique in $disc^{OT}$, then $ AN_{\varphi} $ is a place-bordered fragment of $ocpd(L) = AN$.}
\end{lemma}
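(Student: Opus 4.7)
The plan is to exploit the three-stage decomposition $ocpd = finalize \circ merge^n \circ disc^{OT}$ and show, stage-by-stage, that the subnet of $AN_\varphi$ is preserved inside $AN$, is weakly connected, and borders the rest of $AN$ only at places. Throughout, the central lever is \emph{label-exclusivity}: by conditions (vi) and (vii) of \autoref{def:oiwlp}, the activities $act_1$, $act_2$ and any auxiliary activities carried by the $e_4$-type events within the loop do not occur in any event of $L$ that lies outside $L_\varphi$.

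For the $disc^{OT}$ stage I would compare the Inductive miner's output on the two families of flattened logs object-type-wise. For $ot_1 \in OT_\varphi$, every trace in $L_\varphi^{ot_1}$ has the form $act_1 \cdot \sigma \cdot act_1$ with $act_2$ (and the permitted auxiliary activities) inside $\sigma$, so IM selects a loop cut with DO-part $act_1$ and yields a subtree of the form $\circlearrowleft(act_1, \tau)$; for $ot_2 \in OT_\varphi$ the traces have the form $act_2 \cdot \sigma' \cdot act_1$, yielding a top-level sequence cut. For the full flattened log $L^{ot_1}$ the loop block is embedded inside a larger trace, but label-exclusivity forces every IM cut to either keep the $\{act_1, act_2, \dots\}$-block intact in one branch or to separate it cleanly from the remainder, so IM's recursion eventually reaches exactly the loop sublog it would analyse for $L_\varphi^{ot_1}$ and produces the identical loop subtree. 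The analogous argument holds for $ot_2$.

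Next, I would invoke the standard correspondence between process trees and block-structured WF-nets \cite{leemans_discovering_2013,van_zelst_translating_2020}: each subtree translates to a subnet whose unique source and sink places form the entire interface to the surrounding context. Hence for every $ot \in OT_\varphi$, the accepting Petri net $APN_\varphi^{ot}$ produced from $L_\varphi^{ot}$ embeds as a place-bordered fragment of the accepting Petri net $APN^{ot}$ produced from $L^{ot}$, with border vertices exactly the source and sink place of the embedded block. For the $merge^n$ stage I would use that merging identifies transitions across object-type APNs only when their labels agree. Since every label in $AN_\varphi$ occurs exclusively in flattenings of $L_\varphi$, the merge acts identically on the $APN_\varphi^{ot}$-tuple and on the corresponding sub-tuple inside the full $APN^{ot}$-tuple; the object-type-wise place-bordered embeddings therefore lift to a place-bordered embedding of the merged $AN_\varphi$ into $AN$. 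Weak connectedness of the embedded fragment is inherited from $AN_\varphi$, which is assumed to be an object-centric WF-net. The $finalize$ stage only adds place types, variable arcs and marking tokens, none of which affects the underlying net structure, so the fragment property is preserved.

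The main obstacle I expect is the ``IM descent'' claim: formally proving that the recursive cuts of IM on $L^{ot_1}$ isolate the pattern sublog into exactly the subtree IM produces for $L_\varphi^{ot_1}$. This will require a case analysis over IM's four cut operators, using (vi) and (vii) to argue that any valid cut either places $act_1$ at the DO-position of a loop cut or cleanly separates the label-exclusive block from the rest. Formalising this step cleanly will likely demand an auxiliary lemma on IM's behaviour when a designated activity set is label-exclusive to a single embedded block of a trace; the other stages are then essentially bookkeeping on the definitions of $merge^n$ and the place-bordered fragment.
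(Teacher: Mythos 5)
Your proposal follows essentially the same route as the paper's own proof: label-exclusivity from conditions (vi)--(vii) to separate $AN_{\varphi}$ from $AN_{\neg\varphi}$, the loop cut for $L_{\varphi}^{ot_1}$ and sequence cut for $L_{\varphi}^{ot_2}$ reappearing as nodes in the process trees for the full flattened logs, and the process-tree-to-WF-net translation plus the label-based merging to obtain the place-bordered embedding. The ``IM descent'' step you flag as the main obstacle is precisely the step the paper asserts in one sentence without further formalisation, so your more careful treatment of it is a refinement rather than a deviation.
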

\begin{proof}
\technical{
    From the subpattern's definition (vi) and (vii) (cf. \autoref{def:oiwlp}), it follows that the two object-centric WF-nets $ AN_{\neg \varphi} $ and $AN_{\varphi}$ do not share any activities and, thus, any transition labels. From the subpattern's definition (ii) and (vii), every trace $\sigma \in L_{\varphi}^{ot_1}$ starts with $act_1$ and ends with $act_1$, while $act_1$ not occurring more than twice in any of the traces. Hence, the Inductive miner finds a loop cut at first applied on $L_{\varphi}^{ot_1}$ that is also a node in the process tree discovered for $L^{ot_1}$. From the subpattern's definition (ii) and (vii), every trace $\sigma \in L_{\varphi}^{ot_2}$ starts with $act_2$, ends with it, and neither $act_1$ nor $act_2$ occurring a second time in any of the traces. Hence, the Inductive miner finds a sequence cut first applied on $L_{\varphi}^{ot_2}$ that is also a node in the process tree discovered for $L^{ot_2}$. Altogether, by definition of the process tree operators and their transformations to WF-nets \cite{van_zelst_translating_2020}, the object-centric WF-net $ AN_{\varphi} $ is a place-bordered fragment of $AN$. $\square$}
\end{proof}
\technical{
The place-bordered fragment $AN_{\varphi}$ discovered for $L_{\varphi}$ constitutes the fragment of $AN$ that is critical in the following statements about $PT$-soundness (cf. \autoref{thm:unsound}, \autoref{thm:dasound} and \autoref{thm:sasound}).}

\begin{theorem}
\label{thm:unsound}
\technical{For $PT = \{ \varphi_{oiwlsp} \}$ the OCPD approach $ocpd_{base}$ is $PT$-unsound.}
\end{theorem}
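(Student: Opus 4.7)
The plan is to establish $PT$-unsoundness by exhibiting a concrete counterexample event log $L \in \ULPHI$ (with $\varphi = \varphi_{oiwlsp}$) for which the two premises of \autoref{def:sound} are met but the object-centric net returned by $finalize \circ merge^n \circ disc^{OT}$ on $L$ fails to be sound. A natural candidate is (a suitable variant of) the fragment $L_1$ in \autoref{tab:motivating-events-table}, already illustrated to yield the unsound net of \autoref{fig:loop}.

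First I would verify the premises. That $disc^{OT}(L)$ is a tuple of sound WF-nets is immediate when $disc = IM$, since the Inductive miner always returns accepting Petri nets corresponding to process trees, which are sound by \cite{leemans_discovering_2013,van_zelst_translating_2020}. Choosing $L$ so that $L_{\neg\varphi}$ contains no further problematic patterns (in the minimal case, by including only events outside the subpattern that realise simple, non-looping interactions) ensures that $ocpd_{base}(L_{\neg\varphi})$ is a sound object-centric WF-net. I would then invoke \autoref{lemma:fragment} to conclude that the accepting object-centric Petri net $AN_\varphi$ discovered for $L_\varphi$ is a place-bordered fragment of $ocpd_{base}(L)$. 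Within $AN_\varphi$ the $ot_1$-projection is the WF-net translation of $\circlearrowleft(act_1, act_2)$ and the $ot_2$-projection is that of $\rightarrow(act_2, act_1)$; since the base merge fuses exactly the transitions sharing an activity label, the unique transitions labelled $act_1$ and $act_2$ inherit pre- and post-sets from both projections, coupling the $ot_1$-loop with the $ot_2$-sequence through these two transitions.

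The crux is a token-counting argument. Every trace satisfying $\varphi_{oiwlsp}$ forces at least one full iteration of the $ot_1$-loop, so $act_2$ (the REDO on the $ot_1$-side) must fire at least once per $ot_1$-object in any accepting execution. Each firing of the merged $act_2$ additionally consumes a token from the $ot_2$-source $i_{ot_2}$, which has no incoming arcs in the WF-net of $ot_2$ and none introduced by merging (the $ot_2$-post-place of $act_1$ is the sink $o_{ot_2}$, not $i_{ot_2}$). Hence once the $ot_2$-token has been spent by the first firing of $act_2$, the loop on $ot_1$ can no longer be iterated, leaving either an $ot_1$-object trapped inside the loop (so $\MF$ is not reachable from a reachable marking) or the REDO dead for the subsequent required iterations. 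Either case contradicts \autoref{def:ocsoundness}; since $AN_\varphi$ is place-bordered, no outside transition can resupply the interior places other than through the shared transitions already analysed, so the violation is not repaired in the full $ocpd_{base}(L)$.

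The main obstacle will be the precise bookkeeping of the object-centric binding semantics, in particular ensuring the argument is robust to the specific WF-net translation chosen for the loop and sequence process-tree operators and that silent transitions introduced at loop entry/exit are not accidentally merged across object types (this follows because such silents carry no activity label, so the base merge does not identify them, but it has to be stated explicitly). A minor secondary task is producing a concrete $L$ meeting both premises simultaneously; this can be accomplished by taking $L_{\neg\varphi}$ to consist of events whose discovered object-centric net decomposes into disjoint simple sound WF-nets.
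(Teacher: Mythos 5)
Your overall strategy is the same as the paper's: exhibit a counterexample log (essentially $L_1$), check the two premises of \autoref{def:sound}, and show that the merged net violates \autoref{def:ocsoundness}; the paper does exactly this with the six events of $L_1$, for which $L_{\neg \varphi}$ (the two events ``initialize'' and ``escalate request'') trivially yields a sound net, and then simply reads the violation off the discovered net in \autoref{fig:loop}. However, your core unsoundness argument has a genuine gap. The token-counting dichotomy does not establish a violation of \autoref{def:ocsoundness}: ``no dead transitions'' only requires each transition to be enabled in \emph{some} reachable marking, so the REDO being unable to fire a \emph{second} time is not a violation, and nothing traps the $ot_1$-object inside the loop, because the loop exit (``escalate request'') involves only $ot_1$. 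If the merged net really could perform one full iteration $act_1\, act_2\, act_1$ and then exit, it would be sound, and your argument gives no reason why it cannot.

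The actual failure mode, which your proposal never identifies, is a circular cross-type dependency that causes an \emph{immediate} deadlock: the merged $act_1$ has an $ot_2$-typed input place lying \emph{downstream} of $act_2$ (and of the intermediate ``fail on request'') in the $ot_2$-sequence, while the merged $act_2$ has an $ot_1$-typed input place lying downstream of $act_1$ in the $ot_1$-loop. Hence neither merged transition is ever enabled, both are dead, and $\MF$ is unreachable --- exactly what the paper observes in \autoref{fig:loop}, where after transition ``t1'' (initialize) no further transition can fire. In particular, the firing ``once the $ot_2$-token has been spent by the first firing of $act_2$'' that your counting argument presupposes can never occur. To repair the proof, replace the counting argument by this mutual-wait argument (or, as the paper does, verify the premises for $L_1$ and point to the deadlock in the concretely discovered net); your premise-checking and the appeal to \autoref{lemma:fragment} are fine but the latter is not needed for a counterexample.
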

\begin{proof}
    We prove the theorem by providing a counterexample for the opposite. Let $L$ be an event log with the six events with event ids 0ab63, 6b0b9, ddf21, kj875, 9c7f8 and 207f2 in \autoref{tab:motivating-events-table}. The four events with event ids 6b0b9, ddf21, kj875 and 9c7f8 satisfy $\varphi_{oiwlsp}$ such that event log $L $ contains subpattern $\varphi_{oiwlsp}$. 
    $L_{\neg \varphi_{oiwlsp}}$ consists of events with event ids 0ab63 and 207f2, for which $ocpd_{base}$ trivially discovers a sound object-centric WF-net. 
    As can be seen in \autoref{fig:loop}, $ocpd_{base}$ discovers sound WF-nets for all three object types coordinator, Customer and service provider, e.g. with $\alpha^+$ miner \cite{de_medeiros_workflow_2003}, Inductive miner \cite{leemans_discovering_2013}, Heuristics miner \cite{weijters_flexible_2011}, ILP miner \cite{van_zelst_discovering_2018} or Region-based miner \cite{van_der_aalst_process_2010} for $disc^{ot}(L^{ot})$. The accepting object-centric Petri net in \autoref{fig:loop} shows that after transition "t1" labeled with activity "initialize" no further transition can fire, thus violating the "no dead transitions" property required for an object-centric sound object-centric WF-net. $\square$
\end{proof}

\autoref{thm:unsound} shows that the existing OCPD approach discovers unsound process models for all event logs containing the object interactions with loops subpatterns. In \autoref{sec:main}, we propose two extensions to the OCPD approach such that the extended approaches discover sound process models for all event logs containing the subpattern. 

\subsection{Spurious Interactions}
\label{ssec:limreplay}

The spurious interactions pattern $\varphi_{si} \in U_{pt}$ is contained in an event log $L \in \UL\!\!\upharpoonright_{\varphi_{si}}$ iff for two different object types $ot_1, ot_2$ appearing in the event log there exist at least two different events $e_1, e_2$ with the same activity label $\PA(e_1) = \PA(e_2) = act$ such that these two events do not share objects of type $ot_1, ot_2$, one event of the two events is related to an object of type $ot_1$, the other related to an object of type $ot_2$ and there does not exist any other events $e_3$ with the same activity label that share objects of types $ot_1, ot_2$. 

Since the OCPD approach $ocpd_{base}$ expects in the merging of accepting Petri nets $merge^n$ for transitions $t_1 \in disc(L^{ot_1}), t_2 \in disc(L^{ot_2})$ with the same activity label $l^{ot_1}(t_1) = l^{ot_2}(t_2) = act$ that these are supported by object interactions in the event log, all transitions with the same activity label are merged into a single transition of the merged labeled Petri net (cf. \autoref{fig:spurios}). In case of an event log containing the spurious interactions pattern, this merging results in merged transitions for activity label $act$ that are not supported by the event log. Consequently, the accepting object-centric Petri net restricts the behavior of the process model, i.e., the respective object execution workflows for $ot_1, ot_2$ cannot execute the transition labeled with $act$ independently, but have to synchronize. However, this restriction on the behavior in the process model is not supported by the event log. In \autoref{ssec:spurious}, the approach to overcome this limitation is presented.

\section{Approaches to Overcome Limitations: Object Interactions with Loops and Spurious Interactions}
\label{sec:main}

The two limitations object interactions with loops and spurious interactions are both caused by the expectation of the merging $merge^n$ of the OCPD approach $ocpd_{base}$ that similar activity labels indicate similar object interactions. Despite the same cause in the OCPD approach, the result of the two limitations on the discovered process model are different, as the former causes the $ocpd_{base}$ to discover unsound process models, while the latter does not affect soundness, but restricts the behavior possible in the process model without support by the event log. In \autoref{ssec:patterns}, two extensions to $ocpd_{base}$ are proposed that overcome the object interactions with loops pattern and it is shown for its subpattern that these extensions discover sound process model despite the event log containing the problematic subpattern. In \autoref{ssec:spurious}, an extension to $ocpd_{base}$ is proposed that overcomes the spurious interactions limitation by removing the restriction introduced to the process model without support of the event log.

\subsection{Object Interactions with Loops Pattern}
\label{ssec:patterns}

We propose two different approaches, \emph{different activity} and \emph{similar activity}, of extending the OCPD approach $ocpd$ to overcome the limitation of discovering unsound process models in light of event logs containing the object interactions with loops pattern. In the following, we present the two approaches as extensions to the $merge^n$ and $finalize$ mappings of the OCPD approach (cf. \autoref{ssec:log}).

\subsubsection*{Different Activity Extension of OCPD Approach}

\janik{Given an event log $L = (E, \OR) $ containing the object interactions with loops pattern. Identify all events $e \in E$ that constitute the object interactions with loops pattern and identify event $e_1$ of the pattern, i.e., the event with id ``6b0b9'' and activity label ``receive request'' in \autoref{tab:motivating-events-table} that records the first execution of the receiving a request from a ``customer'' (cf. collaboration model \autoref{fig:interaction_da}). Relabel every event $e \in E$ that matches the object interaction with loops pattern in the form of $e_1$ to a new activity label $act \in \UACT \;\setminus\; \{ \PA(e) | e \in E \} $. Then, apply $ocpd_{base}$. Finally, relabel the transition $t \in T$ of the accepting object-centric Petri net that is labeled with the new activity label $act$ back to the original activity label $act_1$.}

\janik{The process model in \autoref{fig:sound_model} is discovered with the OCPD approach extended with the different activity extension. The process model has two transitions with the activity label ``receive request'' corresponding to $act_1$ in the object interactions with loops pattern. The process model is a sound object-centric WF-net, because the problematic object interaction in the loop discovered without the extension (cf. \autoref{fig:loop}) is now separated into two distinct transitions.}

In general, the OCPD approach extended with different activity $ocpd_{da}$ breaks the problematic loop recorded for object type $ot_1$ up by relabeling the first DO-part execution such that process discovery techniques discover a sequential relationship between the relabeled activity and the subsequent activities instead of a loop for $L^{ot_1}$. \technical{For the Inductive miner, we prove this conjecture.}
\begin{lemma}
\label{lemma:im}
    \technical{
    Given the object interactions with loops subpattern $\varphi_{oiwlsp} \in \mathbb{U}_{pt}$, let $L \in \UL\!\!\upharpoonright_{\pat}$ be an event log that contains the object interactions with loops subpattern, $OT$ be the object types and $A$ the activities appearing in event log $L$. Let $L_{\varphi_{oiwlsp}} = ( E \;\setminus\; E_{\neg \pat}, \preceq_{E \;\setminus\; E_{\neg \pat}}) $ be the event (sub-)log containing all events that satisfy the subpattern $\pat$ and $OT_{\pat} \subseteq OT$ be the object types appearing in $L_{\pat}$. Let $ot_1, ot_2 \in OT_{\pat}$ be the two object types that are instantiated for the two variables of the same name in $\pat$ and $act_1, act_2 \in A$ that are instantiated for the two variables of the same name in $\pat$ for satisfying the subpattern. Then, for $flatten_{ot_1}(relabel_{\pat}(L_{\pat})) = L^{ot_1}$ the Inductive miner discovers a process tree with a sequence operator at the root of the tree, i.e. $IM(L^{ot_1}) = \,\rightarrow (act, IM_1(split_\rightarrow(L^{ot_1})), act_1)$ with $act \in \UACT \; \setminus\; A$ the new activity label \footnote{$split_\rightarrow$ is the Inductive miner's split into sublog function for the sequence operator \cite{leemans_discovering_2013,van_der_aalst_process_2016}.}, and for $flatten_{ot_2}(relabel_{\pat}(L_{\pat})) = L^{ot_2}$ the Inductive miner discovers a process tree with a sequence operator at the root of the tree, i.e. $IM(L^{ot_2}) = \rightarrow(act_2, IM_1(split_\rightarrow(L^{ot_2}), act_1) $.}
\end{lemma}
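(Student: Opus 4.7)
The plan is to show that after relabeling, the flattened logs for $ot_1$ and $ot_2$ each have the shape ``fixed prefix activity, middle block, fixed suffix activity $act_1$'', and then invoke the standard sequence-cut rule of the Inductive miner.

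First I would characterize the traces of $L^{ot_1} = flatten_{ot_1}(relabel_\varphi(L_\varphi))$. By conditions (i)--(iii) of \autoref{def:oiwlp}, the three pattern events $e_1, e_2, e_3$ share an object of type $ot_1$ and appear in that temporal order with activities $act_1, act_2, act_1$. After $relabel_\varphi$, the first DO-event $e_1$ carries the fresh label $act \not\in A$ while $e_3$ still carries $act_1$. By condition (vi), $act_1$ and $act_2$ do not occur in any event of $E$ outside the pattern, and $act$ is fresh, so in each trace of $L^{ot_1}$ neither $act$ nor $act_1$ occurs in the middle. By condition (vii), any ``intra-loop'' event $e_4$ shares an $ot_1$-object only in the first disjunct and has an activity that does not reoccur outside the loop. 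This pins every trace $\sigma \in L^{ot_1}$ to the shape $\sigma = \langle act \rangle \cdot \sigma_{\text{mid}} \cdot \langle act_1 \rangle$ with $\sigma_{\text{mid}} \in (A \setminus \{act, act_1\})^*$.

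Next I would do the analogous analysis for $L^{ot_2} = flatten_{ot_2}(relabel_\varphi(L_\varphi))$. By conditions (iv) and (v), $ot_2$ is absent from $e_1$ but present in both $e_2$ and $e_3$, so only $e_2$ and $e_3$ contribute pattern-events to the $ot_2$-flattening, with activities $act_2$ (first) and $act_1$ (last). Intra-loop events from the second disjunct of (vii) may fall in between, but their activity labels are disjoint from those of events outside the loop and in particular disjoint from $\{act_1, act_2\}$ by (vi). Hence every trace of $L^{ot_2}$ has the form $\langle act_2 \rangle \cdot \sigma'_{\text{mid}} \cdot \langle act_1 \rangle$ with $\sigma'_{\text{mid}} \in (A \setminus \{act_1, act_2\})^*$.

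The third step is to verify that the Inductive miner's sequence-cut rule applies in both cases. For $L^{ot_1}$, the directly-follows graph has $act$ as a unique start activity with no incoming edges and $act_1$ as a unique end activity with no outgoing edges, and there is no DFG path back from the middle block to $\{act\}$ or from $\{act_1\}$ to the middle block. This is exactly the maximal sequence cut $(\{act\}, A_{\text{mid}}, \{act_1\})$ that the IM selects before recursing, yielding $IM(L^{ot_1}) = \rightarrow(act, IM_1(split_\rightarrow(L^{ot_1})), act_1)$. The argument for $L^{ot_2}$ is symmetric and gives $IM(L^{ot_2}) = \rightarrow(act_2, IM_1(split_\rightarrow(L^{ot_2})), act_1)$.

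The main obstacle I expect is the bookkeeping for the ``middle'' segment: one must ensure that condition (vii) really forbids any occurrence of $act, act_1, act_2$ strictly between the first and last pattern event in each $ot_i$-flattened trace, and that the intra-loop events do not accidentally reintroduce a directly-follows edge from the middle block back to $\{act\}$ or $\{act_2\}$ or forward past $\{act_1\}$. Once the trace shape is established, the sequence-cut step is a direct application of the IM fallthrough hierarchy as defined in \cite{leemans_discovering_2013}, so the bulk of the work is the structural characterization rather than the cut-selection itself. $\square$
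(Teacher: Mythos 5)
Your proposal is correct and follows essentially the same route as the paper's proof: establish that after relabeling every trace of $L^{ot_1}$ has the shape $act \cdot \sigma_{\text{mid}} \cdot act_1$ and every trace of $L^{ot_2}$ (untouched by the relabeling, since $e_1$ carries no $ot_2$-object) has the shape $act_2 \cdot \sigma'_{\text{mid}} \cdot act_1$, and then conclude that the Inductive miner first applies a sequence cut in both cases. Your version merely spells out the directly-follows-graph justification and the role of condition (vii) in more detail than the paper does.
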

\begin{proof}
\technical{
    Because of (ii), (iii) and (vi) in \autoref{def:oiwlp} and by definition of \\$relabel_{\pat}$, $L^{ot_1}$ only contains traces that start with $act$, end with $act_1$, and do not contain $act_1$ in between, i.e., Inductive miner first applies a sequence cut on the directly-follows graph built for $L^{ot_1}$. From \autoref{def:oiwlp} (iv) and the definition of $relabel_{\pat}$, it follows that no events are relabelled in $L^{ot_2}$. From \autoref{def:oiwlp} (ii), (iv), (v) and (vi), it follows that $L^{ot_2}$ only contains traces that start with $act_2$, end with $act_1$ and do not contain neither $act_1$ nor $act_2$ in between such that the Inductive miner will first apply a sequence cut. $\square$}
\end{proof}

\technical{By considering the now "aligned" sequential relationship of activities (discovered by the Inductive miner) for which the two object interactions between $ot_1$ and $ot_2$ are recorded, the previously problematic place-bordered fragment $AN_{\varphi}$ (cf. \autoref{thm:unsound}) becomes sound such that $ocpd_{da}$ becomes $\{\pat\}$-sound.}

\begin{theorem}
\label{thm:dasound}
\technical{If the Inductive miner is used for process discovery on flattened event logs, then OCPD Approach extended with different activity $ocpd_{da}$ is a $\{\pat\}$-sound OCPD technique.}
\end{theorem}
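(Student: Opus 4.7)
The plan is to peel off the theorem in three layers: first characterise what the Inductive miner produces on the relabelled flattened logs; second reassemble the per-object-type WF-nets into an object-centric WF-net and establish soundness via a place-bordered-fragment argument; third check that the cosmetic post-processing relabel preserves all required properties.

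First I would apply $relabel_{\pat}$ to the events of $L$ instantiating the first DO-occurrence, yielding a log whose flattened projections now begin with a sequence rather than a loop by Lemma \ref{lemma:im}. Combined with item (vi) of Definition \ref{def:oiwlp}, which after relabelling still guarantees that the subpattern fragment and its complement share no activity labels, this lets me invoke a relabelled analogue of Lemma \ref{lemma:fragment}: the object-centric net discovered for the relabelled pattern events is a place-bordered fragment $AN_{\pat}^{rel}$ of the net discovered for the whole relabelled log. The fragment's two $ot$-projections are each generated by process trees whose root is a sequence operator with an atomic leaf ($act$/$act_1$ for $ot_1$, $act_2$/$act_1$ for $ot_2$) at each border, so by the process-tree-to-WF-net translation of \cite{van_zelst_translating_2020} each projection is a sound WF-net with the border places serving as unique source and sink.

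Second I would argue directly that $AN_{\pat}^{rel}$ is a sound accepting object-centric WF-net, and that inserting it into $ocpd_{da}(L_{\neg \pat})$ preserves soundness. The only transitions shared between the two projections of $AN_{\pat}^{rel}$ are introduced by $merge^n$ for activity labels actually supported by object interactions in the subpattern; by Lemma \ref{lemma:im} these shared transitions sit at sequence positions in both process trees, so a token-game decomposition of any reachable marking along the place-bordered boundary shows option-to-complete (each part can be driven to its local final marking by the hypothesis on $ocpd_{da}(L_{\neg \pat})$ and by sequential firability inside $AN_{\pat}^{rel}$) and rules out dead transitions (each lies in one of the two sound parts). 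Weak connectedness and the one-source/one-sink condition per $ot$-projection are inherited from the two combined parts.

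Third, $finalize_{da} = post_{label} \circ finalize_{base}$ only renames the fresh transition labelled $act$ back to $act_1$; since $(P, T, F)$, $pt$, $F_{var}$, $\MI$ and $\MF$ are untouched, option-to-complete and no-dead-transitions carry over verbatim, and the object-centric WF-net property is preserved. The main obstacle I expect is the middle step: merging two sound WF-nets through a shared synchronising transition is not generally soundness-preserving. I would overcome this by leaning crucially on the place-bordered-fragment characterisation (all cross-fragment arcs touch only $ot_1$- or $ot_2$-typed border places) together with the specific sequence-at-the-root shape Lemma \ref{lemma:im} provides, so that the token of $ot_1$ and the token of $ot_2$ that must synchronise at the shared $act_1$ transition are provably both enabled at the same reachable marking, eliminating the deadlock exhibited in the proof of Theorem \ref{thm:unsound}.
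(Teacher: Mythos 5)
Your proposal is correct and follows essentially the same route as the paper: it rests on Lemma~\ref{lemma:im} and the place-bordered-fragment localization of Lemma~\ref{lemma:fragment}, and resolves the critical point exactly as the paper does, namely that after relabelling the two merged synchronizing transitions (labelled $act_1$ and $act_2$) occupy sequentially aligned positions in both $ot$-projections, so the deadlock of Theorem~\ref{thm:unsound} cannot arise. The only difference is presentational -- you argue soundness of the fragment and its composition directly, whereas the paper frames the same argument as a proof by contradiction -- and your explicit note that $post_{label}$ is behaviour-preserving is implicit in the paper's appeal to the definition of $finalize_{da}$.
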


\begin{proof}
\technical{
Given the object interactions with loops subpattern $\varphi = \pat \in \mathbb{U}_{pt}$, let $L \in \UL\!\!\upharpoonright_{\varphi}$ be an event log containing the subpattern with object types $OT = \{ot_1, \ldots, ot_n \} \subseteq \UOT$ and activities $A \subseteq \UACT$ appearing in the event log such that $ocpd_{da}(L_{\neg \varphi}) = AN_{\neg \varphi}$ is a sound object-centric WF-net (I) and $disc_{da}^{OT}(L)$ is a $|OT|$-length tuple of sound WF-nets (II). Let $L_{\varphi} = ( E \;\setminus\; E_{\neg \varphi}, \preceq_{E \;\setminus\; E_{\neg \varphi}}) $ be the event log containing all events that satisfy the subpattern $\varphi$. Assume that $ocpd_{da}(L) = AN$ is an unsound object-centric WF-net (III). 
From (I), it follows that object-centric soundness can only be violated through discovery on $L_{\varphi}$. From (II), it follows that each of the object types appearing in $L_{\varphi}$ result in sound WF-nets through discovery by $disc_{da}$. Hence, object-centric soundness of $ocpd_{da}(L) = AN$ can only be violated through $finalize \;\circ\, merge^n (disc^{OT}(L))$. From \autoref{lemma:fragment} and \autoref{lemma:im}, it follows that the discovered object-centric Petri net $ocpd_{da}(L_{\varphi}) = AN_{\varphi}$ discovered for $L_{\varphi}$ is a place-bordered fragment of the overall object-centric Petri net $ocpd_{da}(L)$. From (I), (II) and (III), it follows that $AN_{\varphi}$ is the largest place-bordered fragment of $AN$ that is object-centric unsound. From \autoref{lemma:im} and by definition of $finalize_{da}$, the initial and final marking of $AN_{\varphi} = (ON_{\varphi}, M_{init, \varphi}, M_{final, \varphi} )$ agree with the $ot$-type projections' $ON_{\varphi}\!\!\upharpoonright_{ot}$ source and sink for $ot \in OT_{\varphi}$. From (II), \autoref{lemma:im}, and by definition of the subpattern (cf. \autoref{def:oiwlp}) and the mapping $merge_{base}^n$, it follows that the only markings $M \in R(ON_{\varphi},  M_{init, \varphi})$ that do not have the "option to complete" property for object-centric soundness, can be markings in which a transition $t \in T_{\varphi}$ synchronizes the execution flows of two object types, i.e., $ tpl(t) = \{ ot_1 , ot_2 \} \subseteq OT_{\varphi}$ and two object types $ot_1, ot_2 $ that were instantiated to satisfy subpattern $\varphi$, cannot fire anymore (coinciding with the "no dead transition" property). From (iii-vii) of the subpattern's definition, it follows that there are exactly two events $e_2 $ and $e_3$ in the event log $L_{\varphi}$ that constitute an object interaction. Since the only event $e_1$ in $L_{\varphi}$  with the same activity label $act_1$ as $e_3$ is relabeled to an new activity label $act \in \UACT \; \setminus\; A$, the mapping $merge_{base}^n$ merges exactly two transitions that are labeled with $act_1$ and $act_2$, resulting in two synchronizing transitions $t_1, t_2 \in T_{\varphi}$. Hence, for at least one of the two transitions $t_1$ and $t_2$ it must hold, that there does not exist a marking $M \in R(ON_{\varphi}, M_{init, \varphi})$ such that the transition is enabled. From \autoref{lemma:im}, these two labels are sequentially related such that the merging cannot have introduced a marking $M$ into the set of reachable markings $R(ON_{\varphi}, M_{init, \varphi})$ that does not enable either of the two transitions anymore, contradicting the assumption that $AN$ is an object-centric unsound WF-net. $\square$}
\end{proof}

\technical{Hence, the extension construction of the OCPD approach results in a \\$\{\pat\}$-sound OCPD technique.} Consequently, we can apply the extended OCPD approach to event logs recorded from multi-agent systems, service compositions and service orchestrations for cases in which attribute equivalence indicates different real-world activities despite the same activity label. If an object of type $ot_2$ in event $e_1$ of subpattern $\pat$ with an identifier also recorded for the events $e_2$ and $e_3$ is missing or given domain knowledge the activities of the two events $e_1$ and $e_3$ refer to the same real-world activity, then the different activity extension should not be used. 

The reasoning for the similar activity interpretation, missing objects or domain knowledge, must be further differentiated, since the missing object can be added to the event $e_1$ after careful analysis of the event log such that the original OCPD approach $ocpd_{base}$ can be used. If there is no missing object and the activity labels recorded in $e_1$ and $e_3$ refer to the same real-world activity, then the similar activity extension should be used. 

\begin{figure}
  \centering
  \includegraphics[width=0.75\linewidth]{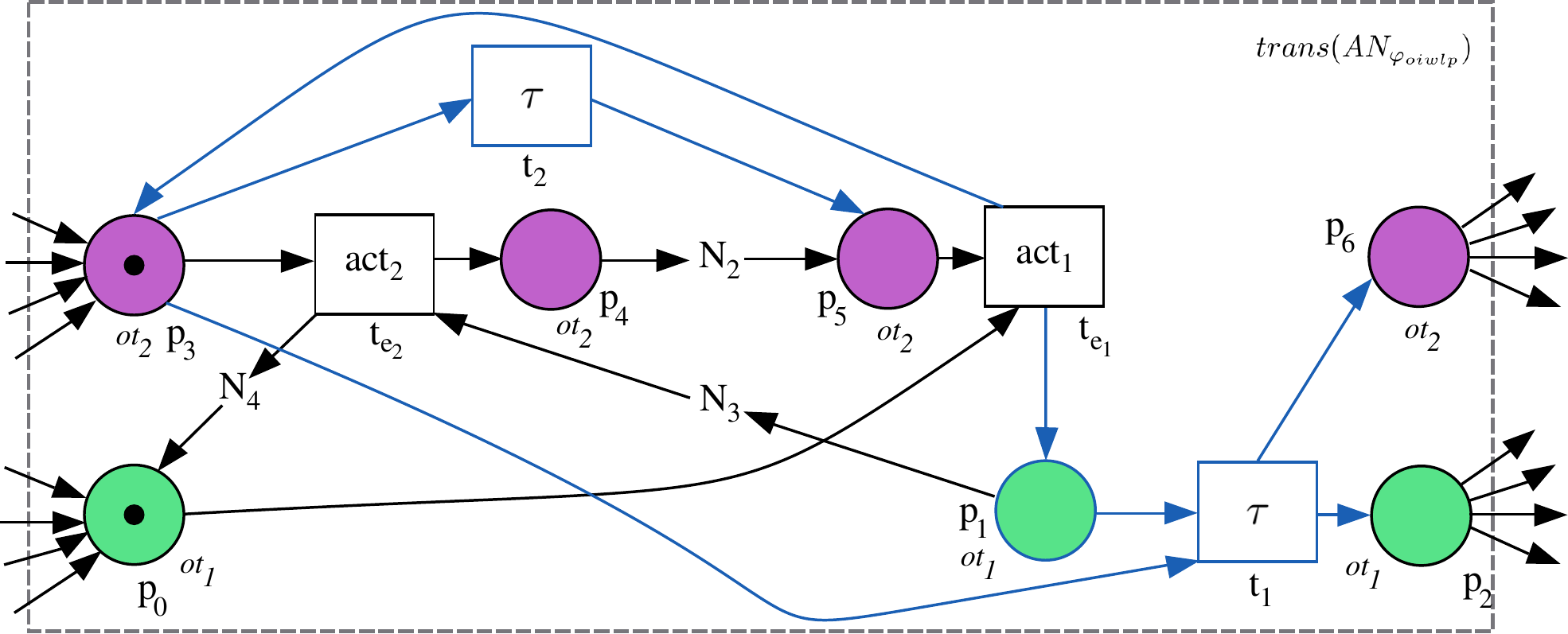}
  \caption{\emph{Sound} accepting object-centric Petri net fragment $trans(AN_{\pat})$ discovered by the OCPD approach extended with similar activity extension by first applying the OCPD approach $ocpd_{base}$ and then transforming the place-bordered fragment $AN_{\pat}$ such that it is sound. The newly added place $p_1$, the two new silent transitions $t_1, t_2$ and the respective new arcs are highlighted in blue. $N_1, N_2, N_3$ are the place-bordered fragments that are discovered for events $e_4$ in $\pat$.}
  \label{fig:sa}
\end{figure}

The idea for the construction of the extension is \janik{to first applying the original approach $ocpd_{base}$. Then, the extension transforms the unsound object-centric WF-net such that the workflow of object type $ot_1$ becomes a loop with DO-part $act_1$ and REDO-part $act_2$, i.e., the workflow of object type $ot_2$ can "mimick" the loop of object type $ot_1$. \autoref{fig:sa} depicts a fragment of the object-centric WF-net that is discovered on all events that satisfy the object interactions with loops pattern only.} The fragment $AN_{\pat}$ is entered by marking $[p_0, p_3]$\footnote{We abstract from the object identifier of the marking in an object-centric Petri net.}. Without the new silent transition $t_2$, the transition $t_{e_1}$ labeled with $act_1$ is never enabled. The only enabled transition is $t_2$ such that after firing $t_2$, transition $t_{e_1}$ is enabled. Hence, the first executed activity is $act_1$. After fragment $N_2$ is executed (corresponding to the activities of events $e_4$ in $\pat$), the object types $ot_1, ot_2$ can synchronize on $t_{e_2}$ labeled with $act_2$. Then, fragments $N_1$ and $N_3$ are executed after which $ot_1, ot_2$ synchronize on $t_{e_1}$ labeled with $act_1$ again. Note, that this exactly replays the behavior recorded in the event log $L_{\pat}$. For the "option to complete" property, object type $ot_1$ may only exit the fragment $trans(AN_{\pat})$ by synchronizing with $ot_2$, thereby marking the exiting places discovered for the object interactions with loops pattern.

\subsubsection*{Similar Activity Extension of OCPD Approach:} \janik{Given an event log $L = (E, \OR) $ containing the object interactions with loops pattern. Identify all events $e \in E$ that constitute the object interactions with loops pattern and identify the corresponding two object types $ot_1$ and $ot_2$. The similar activity extension of the OCPD approach works by first applying the original OCPD approach $ocpd_{base}$ and then transforming the resulting object-centric WF-net as follows. }

\noindent\textbf{Step 1.} Apply the original OCPD approach $ocpd_{base} = AN$ with $AN = \\(ON, \MI, \MF$), $ON = (N, pt, F_{var})$, and $N = (P, T, F, l)$.

\noindent\textbf{Step 2.} Apply a post-transformation $trans \in \UAN \rightarrow \UAN$ that transforms the labeled Petri net $N$ underlying the accepting object-centric Petri net $AN$.

\noindent\textbf{Step 2a.} The transformation starts by adding a place $p_1$ of type $ot_1$ 
and by adding two silent transitions, i.e., $T' = T \cup \{ t_1, t_2 \}$ for $t_1, t_2 \not\in T$, $l' = l \oplus (t_1, \tau) \oplus (t_2, \tau)$.

\noindent\textbf{Step 2b.} The flow relation is transformed by first removing all arcs that connect transition $t_{e_1}$ labeled with activity $act_1$ to places of its post-set, 
i.e., $F^b = F \setminus \{ (t_{e_1}, p) | \exists_{t_{e_1} \in T}\, l(t_{e_1}) = act_1 \wedge p \in  t_{e_1}\!\!\bullet \wedge pt(p) = ot_1 \}$.

\noindent\textbf{Step 2c.} The transformation adds arcs to connect transition $t_{e_1}$ with the new place, the new place with a new silent transition and the new silent transitions with all places in the original post-set of the transition $t_{e_1}$ for object type $ot_1$, i.e.,  $ F^c = F^b \cup \{ (t_{e_1}, p_1), (p_1, t_1) \} \cup \{ (t_1, p) | \exists_{t_{e_1} \in T}\, l(t_{e_1}) = act_1 \wedge p \in  t_{e_1}\!\! \bullet \wedge pt(p) = ot_1 \}$.

\noindent\textbf{Step 2d.} The transformation adds arcs to connect the newly added silent transition $t_1$ with the place in the original post-set of transition $t_{e_1}$ of object type $ot_2$, i.e., $F^d = F^c \cup \{ (t_1, p) | \exists_{t_{e_1} \in T}\, l(t_{e_1}) = act_1 \wedge p \in t_{e_1} \!\!\bullet \wedge pt(p) = ot_2 \}$.

\noindent\textbf{Step 2e.} Next, the transformation connects the place in the pre-set of transition $t_{e_2}$ labeled with $act_2$ with the newly added silent transition $t_1$ and $t_2$ and $t_2$ with the place in the pre-set of transition $t_{e_1}$, i.e., $F^e = F^d \cup \{ (p, t_1), (p, t_2), (t_2, p') |\\ \exists_{t_{e_2} \in T} \,l(t_{e_2}) = act_2 \wedge p \in \bullet t_{e_2} \wedge pt(p) = ot_2 \wedge p' \in t_{e_2} \!\!\bullet \wedge pt(p') = ot_2 \}$.

\noindent\textbf{Step 2f.} Then, the place in the pre-set of transition $t_{e_2}$ gets an incoming arc from transition $t_{e_1}$ such that it can fire its transition again after executing the silent transition, i.e., $F^f = F^e \cup \{ (t_{e_1}, p) | \exists_{t_{e_1}, t_{e_2} \in T}\, l(t_{e_1}) = act_1 \wedge l(t_{e_2}) = act_2 \wedge \\p \in \bullet t_{e_2} \wedge pt(p) = ot_2 \}$.

\noindent\textbf{Step 2g.} Finally, variable arcs are transformed such that paths of variable arcs in the original WF-net can still be traversed by variable arcs in the transformed WF-net, i.e. $F'_{var} = F_{var} \cap F' \cup \{ (n_1', n_2') \in F\setminus F' | \exists_{(n_1, n_2) \in F_{var}^*} n_1 = n_1' \wedge n_2 = n_2' \}$.

\noindent\textbf{Step 3.} Return the transformed object-centric Petri net $trans(AN) = AN'$ with $AN' = (ON', \MI, \MF)$, $ON' = (N', pt', F'_{var})$, $N' = (P', T', F', l').$

\janik{Transforming the object-centric Petri net changes the workflow of object type $ot_2$ from a sequential order to a loop such that it can "participate" in each loop cycle of object type $ot_1$ and both exit the loop by synchronizing on the silent transition $t_1$.} \technical{Similar to the different activity extension, we prove the similar activity extension to discover sound process models for event logs containing the object interactions with loops pattern.}

\begin{theorem}
\label{thm:sasound}
\technical{If the Inductive miner is used for process discovery on flattened event logs, OCPD Approach extended with similar activity is a $PT$-sound OCPD technique for $PT = \{ \pat \}$.}
\end{theorem}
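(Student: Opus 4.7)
The plan is to mirror the proof of \autoref{thm:dasound} but to replace the ``relabel-then-discover'' argument with a ``discover-then-transform'' argument. Given an event log $L \in \UL\UP_{\pat}$ with $disc_{sa}^{OT}(L)$ a tuple of sound WF-nets and $ocpd_{sa}(L_{\neg \pat})$ a sound object-centric WF-net, I would argue by contradiction and assume $ocpd_{sa}(L) = trans(ocpd_{base}(L)) = AN'$ is unsound. Since $ocpd_{sa}$ and $ocpd_{base}$ agree on how the $ot$-type projections are discovered and merged, \autoref{lemma:fragment} still applies to $ocpd_{base}(L)$: the subnet $AN_{\pat}$ discovered from $L_{\pat}$ is a place-bordered fragment of $ocpd_{base}(L)$, and hence, because $trans$ edits only the transitions $t_{e_1}$ and $t_{e_2}$ labeled $act_1$ and $act_2$ together with their incident places, $trans(AN_{\pat})$ is a place-bordered fragment of $AN'$.

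Next, I would reduce the unsoundness to the transformed fragment. By the assumption that $ocpd_{sa}(L_{\neg \pat})$ is sound and that each $ot$-type projection discovered by the Inductive miner is a sound WF-net, any failure of the three object-centric soundness properties of \autoref{def:ocsoundness} must originate inside $trans(AN_{\pat})$: initial and final markings are untouched by $trans$ on source and sink places (Step 2 only adds the internal place $p_1$ and silent transitions $t_1, t_2$, so they are neither source nor sink), variable-arc paths are preserved by Step 2g, and place-bordered fragments compose soundly with their environment provided the fragment itself is an object-centric sound WF-net with its own source and sink places coinciding with those inherited from $ocpd_{base}$.

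The core of the argument is then to verify object-centric soundness of $trans(AN_{\pat})$ directly from the semantics sketched around \autoref{fig:sa}. By \autoref{lemma:fragment} and the Inductive-miner cuts in its proof, $AN_{\pat}$ consists of a loop cut on $ot_1$ (DO-part $act_1$, REDO-part corresponding to the events $e_4$ in $\pat$) merged with a sequence cut on $ot_2$ at $act_2, act_1$. After Step 2a--2f, one checks by case analysis on the reachable markings that: (i) from the initial marking of the fragment, $t_2$ is the only enabled transition, after which $t_{e_1}$ fires, producing a token on $p_1$; (ii) from $p_1$ the token must flow through $t_1$, which by Steps 2c and 2d produces tokens on the original post-set of $t_{e_1}$ for both $ot_1$ and $ot_2$, so the two object-type workflows resynchronize on $t_{e_2}$ and then on $t_{e_1}$, exactly replaying the loop recorded in $L_{\pat}$; (iii) the only way to leave the fragment for $ot_2$ is via $t_1$, which is also the exit transition for $ot_1$, so ``option to complete'' is satisfied; and (iv) $t_1, t_2, t_{e_1}, t_{e_2}$ all become firable in some reachable marking, giving ``no dead transitions''.

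The main obstacle will be step (iii), namely showing that every reachable marking of $trans(AN_{\pat})$ can reach the exit marking. The added place $p_1$ together with the silent transition $t_2$ breaks the original deadlock from \autoref{thm:unsound}, but one must exclude a new deadlock in which a token of $ot_1$ sits on $p_1$ while no $ot_2$-token is yet available on $\bullet t_{e_2}$ through $t_1$; the incoming arc added in Step 2f from $t_{e_1}$ to $\bullet t_{e_2}$ is precisely what guarantees this, but formalizing this requires tracking the invariant that after every firing of $t_{e_1}$ the multiset on the $ot_2$-input of $t_{e_2}$ carries a matching token. Once this invariant is established, the three soundness properties follow and, by the fragment composition argument in paragraph two, contradict the assumption that $AN'$ is unsound, completing the proof. $\square$
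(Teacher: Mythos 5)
Your proposal takes essentially the same route as the paper's proof: reduce, exactly as in \autoref{thm:dasound}, to the place-bordered fragment discovered for $L_{\pat}$, and then establish soundness of the transformed fragment $trans(AN_{\pat})$ by walking through its reachable markings, which is precisely the argument the paper gives (informally) around \autoref{fig:sa}. Your version merely makes explicit the invariant guaranteed by the Step 2f arc, so it is a correct and somewhat more detailed rendering of the same proof.
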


\begin{proof}
\technical{
The proof is analogous to the proof of \autoref{thm:dasound} such that we only need to prove that the place-bordered fragment $AN_{\pat}$ is a sound object-centric WF-net after the post-transformation $trans$ is applied
. As aforementioned, the place-bordered fragment $trans(AN_{\pat})$ is sound (cf. \autoref{fig:sa}). $\square$}
\end{proof}

\technical{All in all, it is proven through \autoref{thm:dasound} and \autoref{thm:sasound} that the two extensions overcome the object interactions with loops limitation of the OCPD approach $ocpd_{base}$.}

In general, we cannot prefer one approach over the other, as they treat the ``problematic'' activity (cf. ``receive request'' in \autoref{tab:motivating-events-table} and \autoref{fig:loop}) in the DO-part of the loop that first records no object interaction, but records an object interaction in the second execution of the DO-part, fundamentally different.

The different activity approach interprets the two events recorded for the DO-part of the loop as referring to two different activities in spite of the same activity label. Cases that support the different activity approach are based on information systems that record a too coarse-grained semantic granularity for activity labels in its events such that the label equivalence becomes too imprecise. Moving from label equivalence to attribute equivalence \cite{DBLP:journals/jwsr/Rinderle-MaRJ11} tackles the imprecise distinction between two events as referring to two different real-world activities solely based on the activity label for event logs from those information systems. By taking the object interactions into account, the two events recorded for the DO-part refer to two different real-world activities. In contrast, the similar activity approach maintains the interpretation that the ``problematic'' activity is the same despite having no object interaction recorded between the two object types in question. Cases that support the similar activity approach are data quality issues in the form of a missing object or additional domain knowledge that leads to the decision that the activity label recorded in the events refers to the same executed activity in the real world. 

For example, the event log depicted in \autoref{tab:motivating-events-table} is recorded by an information system that uses a too coarse-grained semantic granularity for its activity labels (cf. collaboration model in \autoref{fig:interaction_da}). The first "receive request" recorded in the event with id 6b0b9 refers to an activity in which the coordinator receives a new request from service provider. After delegating the request to Customer, Customer sends the request back to the coordinator, i.e., an old request is received from a participant that is supposed to handle the request. Hence, the event with id 9c7f8 refers to a real-world activity that is different to the previously referred real-world activity. 

Depending on the interpretation of the mismatch between activity labels and object interactions in practice, either the different activity or the similar activity extension is beneficial. Both extensions are prototypically implemented in \url{https://gitlab.com/janikbenzin/ocpd/} by extending the original OCPD approach implemented in the Python library PM4PY\footnote{\url{https://pm4py.fit.fraunhofer.de/}}

\subsection{Spurios Interactions}
\label{ssec:spurious}

The spurious interaction limitation can be overcome by first relabeling one of the two different events $e_1, e_2$ that satisfy the pattern $\varphi_{si}$ (cf. \autoref{ssec:limreplay}) to a new activity label $act_{new} \in \UACT \setminus \;\mathbb{A}$ for $\mathbb{A}$ the set of activities appearing in the event log before applying $ocpd_{base}$. After discovery of an accepting object-centric Petri net by $ocpd_{base}$ the transition labeled with $act_{new}$ is relabeled back to the original activity label $act$ of the two events $e_1, e_2$ that satisfied the pattern. This relabeling approach to overcome the spurious interactions limitations is also prototypically implemented by extending the original OCPD approach (cf. \autoref{ssec:patterns}).



\section{Related Work}
\label{sec:rel}

For a comprehensive overview for related work on the OCPD approach in terms of classical process discovery and object-centric process discovery, we refer to \cite{van_der_aalst_discovering_2020}. \cite{nesterov_discovering_2023} proposes a compositional object-centric process discovery technique for multi-agent systems that takes the system architecture in terms of interaction patterns into account. Various common synchronous and asynchronous interaction patterns are defined using labeled Petri nets. Given an interface pattern, the technique searches for a series of structural Petri net refinement transformations that are soundness-preserving \cite{bernardinello_property-preserving_2022} to map parts of the given interface pattern with parts of a process model discovered for each agent individually. If a mapping can be found, then the overall discovered process model is guaranteed to be sound. Due to the additional input of an interface pattern and the limited set of transformations, the technique in \cite{nesterov_discovering_2023} cannot discover process models for the settings our extensions can handle. 

\cite{lomazova_soundness_2021} study properties of object-centric Petri nets without taking the discovery technique into account. \cite{lomazova_soundness_2021} propose a variant of our sound object-centric WF-net definition that focuses on a single object $o$ of a certain object type and ignores the behavior of other objects that are required to complete $o$'s workflow. Hence, our notion of a sound object-centric WF-net is stricter. \cite{van_der_werf_correctness_2022} generalize object-centric Petri nets to Petri nets with Identifiers and prove decidability and verification properties of the generalized class of Petri nets.

\section{Conclusion and Limitations}
\label{sec:concl}

Analogous to classical process discovery, OCPD takes an object-centric event log as input and discovers a process model that represents the real-world business process in terms of the control-flow recorded in the event log. \final{By conceptualizing similarly behaving entities in a collaborative system as object types, we discover a process model of the collaborative system instead of a business process.} For the only existing OCPD approach that discovers object-centric Petri nets, we identify the two limitations object interactions with loops and spurious interactions \final{for discovering process models of collaborative systems.} \final{The first limitation is proven to result in an unsound process model, while for the second limitation it is demonstrated that the resulting process model restricts the behavior of the process model without support in the event log.} 
Both limitations are formalized by means of a pattern contained in the event log. We propose three extensions for the OCPD approach to overcome the two limitations. For the two extensions that target the object interactions with loops limitation, we design the extensions such that it results in sound process models given event logs containing the pattern. 

\final{Nevertheless, our set of interaction patterns that represent a limitation of the OCPD approach is limited to two, although there exist more interaction patterns. Moreover, we do not provide a proven statement on discovery of sound process models for all patterns an event log can contain. Hence, we only demonstrate desired properties for the process model given two limitations.}

\subsection*{Acknowledgments}
This work has been supported by Deutsche Forschungsgemeinschaft (DFG), GRK 2201 and by the Austrian Research Promotion Agency (FFG) via the Austrian Competence Center for Digital Production (CDP) under the contract number 881843.
\bibliographystyle{splncs04}
\bibliography{main}
%



\end{document}